\def\eqref#1{equation~\ref{#1}}
\def\1{\bm{1}}
\DeclareMathAlphabet{\mathsfit}{\encodingdefault}{\sfdefault}{m}{sl}
\SetMathAlphabet{\mathsfit}{bold}{\encodingdefault}{\sfdefault}{bx}{n}
\newcommand{\cS}{\mathcal{S}}
\newcommand{\cA}{\mathcal{A}}
\newcommand{\RR}{\mathbb{R}}
\newcommand{\NN}{\mathbb{N}}
\newcommand{\pr}{\mathbb{P}}
\newcommand{\SR}{\mathbf{\Phi}_{\pi}}
\renewcommand{\vec}[1]{\mathbf{#1}}
\newcommand{\mat}[1]{\mathbf{#1}}
\newcommand{\set}[1]{\mathcal{#1}}
\newcommand{\ip}[1]{\langle #1 \rangle}
\renewcommand{\sp}{\text{span}}
\newcommand{\vecl}[3]{(\vec{#1}_{#2}^{#3})_{#2}}
\newcommand{\id}{\mat{I}}
\newcommand{\pp}{\mat{P}_{\!\pi}}
\newcommand{\ei}{\vec{e}}
\newcommand{\la}{\mat{L}}
\newcommand{\lag}{\mathcal{L}}
\newcommand{\nS}{|\cS|}
\newcommand{\sg}[1]{\llbracket #1 \rrbracket}
\newcommand{\param}{\boldsymbol{\theta}}
\newcommand{\dual}{\boldsymbol{\beta}}
\newcommand{\jev}{\boldsymbol{\nu}}
\newtheorem{theorem}{Theorem}
\newtheorem{lemma}{Lemma}
\newtheorem{proposition}{Proposition}
\newtheorem{corollary}{Corollary}
\long\def\comment#1{}
\title{Proper Laplacian Representation Learning}
\author{Diego Gomez, Michael Bowling$^*$, Marlos C. Machado$^*$ \\
Department of Computing Science, University of Alberta\\
Alberta Machine Intelligence Institute (Amii)\\
$^*$ Canada CIFAR AI Chair\\
Edmonton, AB T6G 2R3, Canada \\
\texttt{\{gomeznor,mbowling,machado\}@ualberta.ca} \\
}
\begin{document}

\maketitle

\begin{abstract}
The ability to learn good representations of states is essential for solving large reinforcement learning problems, where exploration, generalization, and transfer are particularly challenging. The \textit{Laplacian representation} is a promising approach to address these problems by inducing informative state encoding and intrinsic rewards for temporally-extended action discovery and reward shaping. To obtain the Laplacian representation one needs to compute the eigensystem of the graph Laplacian, which is often approximated through optimization objectives compatible with deep learning approaches. These approximations, however, depend on hyperparameters that are impossible to tune efficiently, converge to arbitrary rotations of the desired eigenvectors, and are unable to accurately recover the corresponding eigenvalues. In this paper we introduce a theoretically sound objective and corresponding optimization algorithm for approximating the Laplacian representation. Our approach naturally recovers both the true eigenvectors and eigenvalues while eliminating the hyperparameter dependence of previous approximations. We provide theoretical guarantees for our method and we show that those results translate empirically into robust learning~across~multiple~environments.
\end{abstract}

\section{Introduction}
Reinforcement learning (RL) is a framework for decision-making where an agent continually takes actions in its environment and, in doing so, controls its future states. After each action, given the current state and the action itself, the agent receives a reward and a next state from the environment. The objective of the agent is to maximize the sum of these rewards. In principle, the agent has to visit all states and try all possible actions a reasonable number of times to determine the optimal behavior. However, in complex environments, e.g., when the number of states is large or the environment changes with time, this is not a plausible strategy. Instead, the agent needs the ability to learn representations of the state that facilitate exploration,  generalization, and transfer.



The \textit{Laplacian framework} \citep{mahadevan2005proto, mahadevan2007proto} proposes one such representation. This representation is based on the graph Laplacian, which, in the tabular case, is a matrix that encodes the topology of the state space based on both the policy the agent uses to select actions and the environment dynamics. Specifically, the $d-$dimensional \textit{Laplacian representation} is a map from states to vectors whose entries correspond to $d$ eigenvectors of the Laplacian.


{\color{black} Among other properties, the Laplacian representation induces a metric space where the Euclidean distance of two representations correlates to the temporal distance of their corresponding states; moreover, its entries correspond to directions that maximally preserve state value information~\citep{DBLP:conf/ijcai/Petrik07}. Correspondingly, it has been used for reward shaping~\citep{wu2018laplacian, wang2022reachability}, as a state representation for value approximation~\citep[e.g.,][]{mahadevan2007proto, lan2022generalization, wang2022pesky, DBLP:conf/iclr/FarebrotherGALG23}, as a set of intrinsic rewards for exploration via temporally-extended actions~\citep[see overview by][]{machado2023temporal}, zero-shot learning~\citep{DBLP:conf/iclr/TouatiRO23}, and to achieve state-of-the-art performance in sparse reward environments \citep{klissarov2023deep}.}

\begin{figure}[t]
\begin{center}
\includegraphics[width=0.9\linewidth]{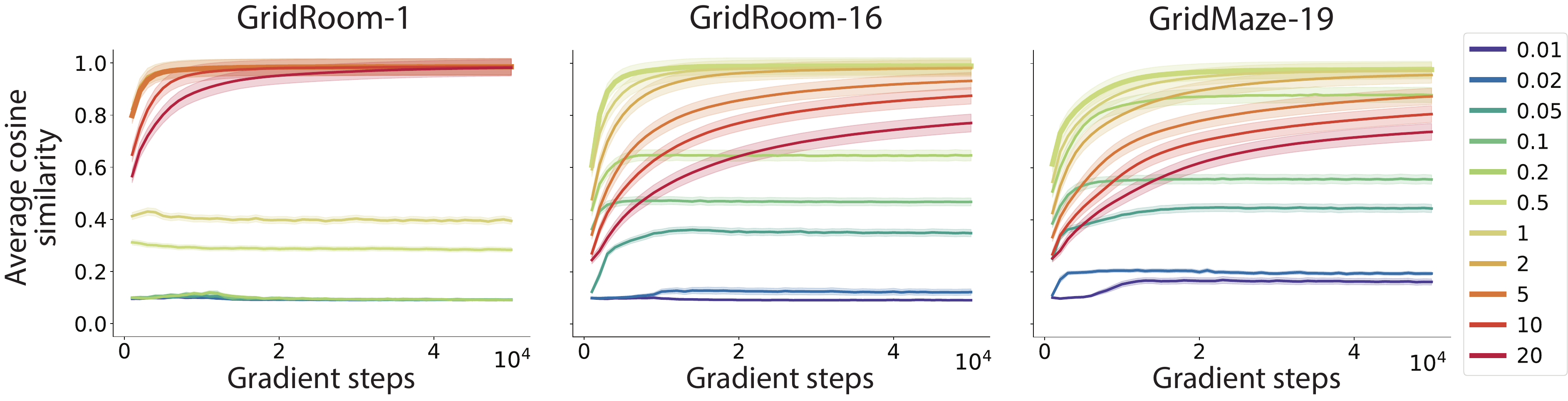}
\end{center}
\vspace{-0.3cm}
\caption{Average cosine similarity between the true Laplacian representation and GGDO for different values of the barrier penalty coefficient, averaged over 60 seeds, with the best coefficient highlighted.  The shaded region corresponds to a 95\% confidence interval.}
\label{fig:brittle_ggdo}
\vspace{-0.3cm}
\end{figure}

When the number of states, $|\cS|$, is small, the graph Laplacian can be represented as a matrix and one can use standard matrix eigendecomposition techniques to obtain its \textit{eigensystem}\comment{\footnote{We use the term eigensystem to refer to both the eigenvectors and eigenvalues of a linear operator.}} and the corresponding Laplacian representation. In practice, however, $|\cS|$ is large, or even uncountable. Thus, at some point it becomes infeasible to directly compute the eigenvectors of the Laplacian. In this context, \citet{wu2018laplacian} proposed a scalable optimization procedure to obtain the Laplacian representation in state spaces with uncountably many states. Such an approach is based on a general definition of the graph Laplacian as a linear operator, also introduced by \citet{wu2018laplacian}. Importantly, this definition allows us to model the Laplacian representation as a neural network and to learn it by minimizing an unconstrained optimization objective, the \emph{graph drawing objective} (GDO). 

{\color{black} A shortcoming of GDO, however, is that arbitrary rotations of the eigenvectors of the Laplacian minimize the graph drawing objective~\citep{wang2021towards}. This is, in general, undesirable since rotating the eigenvectors affect the properties that make them useful as intrinsic rewards (see Appendix \ref{ap:learned_eigen} for more details)}. As a solution, \citet{wang2021towards} proposed the \textit{generalized graph drawing objective} (GGDO), which breaks the symmetry of the optimization problem by introducing a sequence of decreasing hyperparameters to GDO. The true eigenvectors are the only solution to this new objective. Despite this, when minimizing this objective with stochastic gradient descent, the rotations of the smallest eigenvectors\footnote{We refer to the eigenvectors with corresponding smallest eigenvalues as the  ``smallest eigenvectors''.} are still equilibrium points of the generalized objective. Consequently, there is variability in the eigenvectors one actually finds when minimizing such an objective, depending, for example, on the initialization of the network and on the hyperparameters chosen.

These issues are particularly problematic because it is \textbf{impossible to tune the hyperparameters} of GGDO without already having access to the problem solution: previous results, when sweeping hyperparameters, used the cosine similarity between the \emph{true eigenvectors} and the approximated solution as a performance metric. To make matters worse, the best \textbf{hyperparameters are environment dependent}, as shown in Figure \ref{fig:brittle_ggdo}. Thus,  when relying on GDO, or GGDO, \emph{it is impossible to guarantee an accurate estimate of the eigenvectors of the Laplacian in environments where one does not know these eigenvectors in advance}, which obviously defeats the whole purpose. Finally, the existing objectives are \textbf{unable to approximate the eigenvalues} of the Laplacian, and existing heuristics heavily depend on the accuracy of the estimated eigenvectors~\citep{wang2022reachability}.

In this work, we introduce a theoretically sound max-min objective and a corresponding optimization procedure for approximating the Laplacian representation that addresses \emph{all} the aforementioned issues. Our approach naturally recovers both the true eigenvectors and eigenvalues while eliminating the hyperparameter dependence of previous approximations. Our objective,
which we call the Augmented Lagrangian Laplacian Objective (ALLO),
corresponds to a Lagrangian version of GDO augmented with stop-gradient operators. These operators break the symmetry between the rotations of the Laplacian eigenvectors, turning the eigenvectors and eigenvalues into the unique stable equilibrium point of min-max ALLO under gradient ascent-descent dynamics, independently of the original hyperparameters of GGDO. Besides theoretical guarantees, we empirically demonstrate that our proposed approach is robust across different environments with different topologies and that it is able to accurately recover the eigenvalues of the graph Laplacian as well.




\section{Background}\label{sec:background}
We first review the reinforcement learning setting, before presenting the Laplacian representation and previous work on optimization objectives for its approximating approximation.

\paragraph{Reinforcement Learning.}
We consider the setting in which an agent interacts with an environment. The environment is a reward-agnostic Markov-decision process $M=(\cS, \cA, P, \mu_0)$ with finite state space $\cS=\{1,\cdots,|\cS|\}$,\footnote{\color{black}For ease of exposition, we restrict the notation, theorems, and proofs to the finite state space setting. However, we generalize all of them to the abstract setting in Appendix \ref{ap:abstract}, where the state space is an abstract measure space, in a similar fashion as in the work by \citet{wu2018laplacian} and \citet{wang2021towards}.} finite action space $\cA=\{1,\cdots,|\cA|\}$, transition probability map $P:\cS\times\cA\to\Delta(\cS)$, which maps a state-action pair $(s,a)$ to a state distribution $P(\cdot|s,a)$ in the simplex $\Delta(\cS)$, and initial state distribution $\mu_0\in\Delta(\cS)$. The agent is characterized by the policy $\pi:\cS\to\Delta(\cA)$ that it uses to choose actions. At time-step $t=0$, an initial state $S_0$ is sampled from $\mu_0$. Then, the agent samples an action $A_0$ from its policy and, as a response, the environment transitions to a new state $S_1$, following the distribution $P(S_0, A_0)$. After this, the agent selects a new action, the environment transitions again, and so on. The agent-environment interaction determines a Markov process characterized by the transition matrix $\pp$, where $(\pp)_{s,s'}=\sum_{a\in\cA}\pi(s,a)P(s'|s,a)$ is the probability of transitioning from state $s$ to state $s'$ while following policy $\pi$.

\paragraph{Laplacian Representation.}
In graph theory, the object of study is a node set $\set{V}$ whose elements are pairwise connected by edges. The edge between a pair of nodes $v,v'\in\set{V}$ is quantified by a non-negative real number $w_{v,v'}$, which is 0 only if there is no edge between the nodes. The adjacency matrix, $\mat{W}\in\RR^{|\set{V}|\times|\set{V}|}$, stores the information of all edges such that $(\mat{W})_{v,v'}=w_{v,v'}$. The degree of a node $v$ is the sum of the adjacency weights between $v$ and all other nodes in $\set{V}$ and the degree matrix $\mat{D}\in\RR^{|\set{V}|\times|\set{V}|}$ is the diagonal matrix containing these degrees. The Laplacian $\mat{L}$ is defined as $\la=\mat{D}-\mat{W}$, and, just as the adjacency matrix, it fully encodes the information of the graph.

If we consider the state space of an MDP $M$ as the set of nodes, $\set{V}=\set{S}$, and $\mat{W}$ as determined by $\pp$, then we might expect the graph Laplacian to encode useful temporal information about $M$, meaning the number of time steps required to go from one state to another. In accordance with \cite{wu2018laplacian}, we broadly define the \textit{Laplacian} in the tabular reinforcement learning setting as any matrix $\la=\mat{I}-f(\pp)$, where $f:\RR^{|\cS|\times|\cS|}\to\text{Sym}_{|\cS|}(\RR)$ is some function that maps $\pp$ to a symmetric matrix.\footnote{The Laplacian has $|S|$ different \textbf{real} eigenvectors and corresponding eigenvalues only if it is symmetric.} For example, if $\pp$ is symmetric, the Laplacian is typically defined as either $\la=\mat{I}-\pp$ or $\la=\mat{I}-(1-\lambda)\SR^{\lambda}$, where $\SR^{\lambda}=(\id-\lambda\pp)^{-1}$ is a matrix referred to as the successor representation matrix \citep{dayan1993improving, machado2018eigenoption}. In the case where $\pp$ is not symmetric, $\la$ is usually defined as $\la=\mat{I}-\frac{1}{2}(\pp+\pp^\top)$ to ensure it is symmetric \citep{wu2018laplacian}.

The \textit{Laplacian representation}, $\phi:\cS\to\RR^d$, maps a state $s$ to $d$ corresponding entries in the set of $0<d\le\nS$ smallest eigenvectors of $\la$, i.e., $\phi(s)=[\ei_1[s],\cdots,\ei_d[s]]^\top$, where $\ei_i$ is the $i-$th smallest eigenvector of $\la$ and $\ei_i[s]$, its $s-$th entry \citep{mahadevan2007proto, stachenfeld2014design, machado2017laplacian}. 


\paragraph{The Graph Drawing Objective.}

Given the graph Laplacian $\la$, the spectral graph drawing optimization problem \citep{koren2003spectral} is defined as follows:
\begin{align}\label{OP}
    \min_{\vec{u}_1,\cdots,\vec{u}_d\in\RR^{\cS}} \quad
    &\sum_{i=1}^{d} \ip{\vec{u}_i,\la \vec{u}_i} \\
    \text{such that} \quad
    & \ip{\vec{u}_j,\vec{u}_k} = \delta_{jk}\text{ ,}\hspace{5pt}1\le k\le j\le d\,,\nonumber
\end{align}
where $\ip{\cdot,\cdot}$ is the inner product in $\RR^{|\cS|}$ and $\delta_{jk}$ is the Kronecker delta. This optimization problem has two desirable properties.  The first one is that the $d$ smallest eigenvectors of $\la$ are a global optimizer.\footnote{For proofs in the tabular setting, see the work by \citet{koren2003spectral} for the case $d=2$, and Lemma \ref{lem1} for arbitrary $d$. For the abstract setting, see the work by \citet{wang2021towards}.} Hence, the Laplacian representation $\phi$ associated with $\la$ is a solution to this problem. The second property is that both objective and constraints can be expressed as expectations, making the problem amenable to stochastic gradient descent. 
In particular, the original \textit{constrained} optimization problem (\ref{OP}) can be approximated by the unconstrained \textit{graph drawing objective} (GDO):
\begin{align}\label{GDO}
    \min_{\vec{u}\in\RR^{d|\cS|}} \quad
    &\sum_{i=1}^{d} \ip{\vec{u}_i,\la \vec{u}_i} + b \sum_{j=1}^{d}\sum_{k=1}^{d}\big(\ip{\vec{u}_{j}, \vec{u}_{k}} - \delta_{jk}\big)^2\,,
\end{align}
where $b\in(0,\infty)$ is a scalar hyperparameter and $\vec{u}=[\vec{u}_1^{\top},\cdots,\vec{u}_d^{\top}]^{\top}$ is the vector that results from concatenating the vectors $(\vec{u}_i)_{i=1}^d$ \citep{wu2018laplacian}.



\paragraph{The Generalized Graph Drawing Objective.}

As mentioned before, any rotation of the smallest eigenvectors of the Laplacian $\la$ is a global optimizer of the constrained optimization problem (\ref{OP}). Hence, even with an appropriate choice of hyperparameter $b$, GDO does not necessarily approximate the Laplacian representation $\phi$. As a solution, \citet{wang2021towards} present the generalized graph drawing optimization problem:
\begin{align}\label{GOP}
    \min_{\vec{u}\in\RR^{d|\cS|}} \quad
    &\sum_{i=1}^{d} c_i\ip{\vec{u}_i,\la \vec{u}_i} \\
    \text{such that} \quad
    & \ip{\vec{u}_j,\vec{u}_k} = \delta_{jk}\text{ ,}\hspace{5pt}1\le k\le j\le d\,,\nonumber
\end{align}
where $c_1>\cdots>c_d>0$ is a monotonically decreasing sequence of $d$ hyperparameters. Correspondingly, the unconstrained \textit{generalized graph drawing objective} (GGDO) is defined as:
\begin{align}\label{GGDO}
    \min_{\vec{u}\in\RR^{d|\cS|}} \quad
    &\sum_{i=1}^{d} c_i\ip{\vec{u}_i,\la \vec{u}_i} + b \sum_{j=1}^{d}\sum_{k=1}^{d}\min(c_j,c_k)\big(\ip{\vec{u}_{j}, \vec{u}_{k}} - \delta_{jk}\big)^2\,.
\end{align}
\citet{wang2021towards} prove that the optimization problem (\ref{GOP}) has a unique global minimum that corresponds to the smallest eigenvectors of $\la$, for \textit{any} possible choice of the hyperparameter sequence $(c_i)_{i=1}^d$. However, in the unconstrained setting, which is the setting used when training neural networks, these hyperparameters do affect both the dynamics and the quality of the final solution. In particular, \citet{wang2021towards} found in their experiments that the linearly decreasing choice $c_i=d-i+1$ performed best across different environments. More importantly, under gradient descent dynamics, the introduced coefficients are unable to break the symmetry and arbitrary rotations of the eigenvectors are still equilibrium points (see Corollary (\ref{cor1}) in Section \ref{sec:theory}).

{\color{black}
\paragraph{The Abstract and Approximate Settings.}
Lastly, the previous optimization problems (\ref{OP})-(\ref{GGDO}) can be generalized to the abstract setting, where $\cS$ is potentially an uncountable measure space (e.g., the continuous space $\RR^n$). In practice, the only thing that changes is that finite-dimensional vectors $\vec{u}_i$ are replaced by real-valued functions $u_i:\cS\to\RR$,\footnote{\color{black} As a consequence, the eigenvectors of the Laplacian are commonly called eigenfunctions. However, technically speaking, the eigenfunctions are still eigenvectors in the space of square integrable functions.} and matrices by linear operators that map these functions to functions in the same space (see Appendix \ref{ap:abstract} for a detailed discussion). This generalization is useful because it allows to introduce function approximators in a principled way. Specifically, we replace the $d$ functions $(u_1,\cdots,u_d)$ by the parametric model $\phi_{\boldsymbol{\theta}}:\cS\to\RR^d$, where $\boldsymbol{\theta}$ is a finite dimensional parameter vector. In our case, $\phi_{\boldsymbol{\theta}}$ represents a neural network, and $\boldsymbol{\theta}$ a vector containing the weights of the network. This choice allows us to find an approximate solution by iteratively sampling a transition batch, calculating the corresponding optimization objective, and propagating the gradients by means of any stochastic gradient descent-based optimizer. In the following chapters we will see why this gradient descent procedure is incompatible with GGDO and how our proposed objective, ALLO, overcomes this incompatibility in theory and in practice.
}  


\section{Augmented Lagrangian Laplacian Objective}\label{sec:al}
In this section we introduce a method that retains the benefits of GGDO while avoiding its pitfalls. Specifically, we relax the goal of having a unique global minimum for a constrained optimization problem like (\ref{GOP}). Instead, we modify the stability properties of the unconstrained dynamics to ensure that the only stable equilibrium point corresponds to the Laplacian representation.

\paragraph{Asymmetric Constraints as a Generalized Graph Drawing Alternative.}
We want to break the \textit{dynamical symmetry} of the Laplacian eigenvectors that make any of their rotations an equilibrium point for GDO (\ref{GDO}) and GGDO (\ref{GGDO}) while avoiding the use of hyperparameters. For this, let us consider the original graph drawing optimization problem (\ref{OP}). If we set $d=1$, meaning we try to approximate only the first eigenvector $\ei_1$, it is clear that the only possible solution is $\vec{u}_1^{*}=\ei_1$. This happens because the only possible rotations are $\pm \ei_1$. If we then try to solve the optimization problem for $d=2$, but fix $\vec{u}_1=\ei_1$, the solution will be $(\vec{u}_1^{*},\vec{u}_2^{*})=(\ei_1,\ei_2)$, as desired. Repeating this process $d$ times, we can obtain $\phi$. Thus, we can eliminate the need for the $d$ hyperparameters introduced by GGDO by solving $d$ separate optimization problems. To replicate this separation while maintaining a single unconstrained optimization objective, we introduce the stop-gradient operator $\sg{\cdot}$ in GDO. This operator does not affect the objective\comment{ in any way}, but it indicates that, when following gradient \comment{descent} dynamics, the real gradient of the objective is not used. Instead, when calculating derivatives, whatever is inside the operator is assumed to be constant. Specifically, the objective becomes:
\begin{align}\label{SGGDO}
    \min_{\vec{u}\in\RR^{d|\cS|}} \quad
    &\sum_{i=1}^{d} \ip{\vec{u}_i,\la \vec{u}_i} + b \sum_{j=1}^{d}\sum_{k=1}^{j}\big(\ip{\vec{u}_{j}, \sg{\vec{u}_{k}}} - \delta_{jk}\big)^2\,.
\end{align}
Note that in addition to the stop-gradient operators, the upper limit in the inner summation is now the variable $j$, instead of the constant $d$. These two modifications ensure that $\vec{u}_i$ changes only to satisfy the constraints associated to the previous vectors $(\vec{u}_j)_{j=1}^{i-1}$ and itself, but not the following ones, i.e., $(\vec{u}_j)_{j=i+1}^{d}$. Hence, the asymmetry in the descent direction achieves the same effect as having $d$ separate optimization problems. In particular, as proved in Lemma \ref{dyn_eq} in the next section, the descent direction of the final objective, yet to be defined, becomes $\boldsymbol{0}$ only for permutations of a subset of the Laplacian eigenvectors, and not for any of its rotations. 



\paragraph{Augmented Lagrangian Dynamics for Exact Learning.}
The regularization term added in all of the previous objectives (\ref{GDO}), (\ref{GGDO}), and (\ref{SGGDO}) is typically referred to as a quadratic penalty with barrier coefficient $b$. This coefficient shifts the equilibrium point of the original optimization problems (\ref{OP}) and (\ref{GOP}), and one can only guarantee that the desired solution is obtained in the limit $b\to\infty$ \citep[see Chapter 17 by][]{jorge2006numerical}. In practice, one can increase $b$ until a satisfactory solution is found. However, not only is there no direct metric to tell how close one is to the true solution, but also an extremely large $b$ is empirically bad for neural networks when optimizing GDO or GGDO. As a principled alternative, we propose the use of augmented Lagrangian methods. Specifically, we augment the objective (\ref{SGGDO}) by adding the original constraints, multiplied by their corresponding dual variables, $(\beta_{jk})_{1\le k\le j \le d}$. This turns the optimization problem into the following max-min objective, which we call the \emph{augmented Lagrangian Laplacian objective} (ALLO):
\begin{align}\label{AL}
    \max_{\dual}\min_{\vec{u}\in\RR^{d|\cS|}} \vspace{2pt}
    &
    \sum_{i=1}^{d} \ip{\vec{u}_i,\la \vec{u}_i} 
    + \sum_{j=1}^{d}\sum_{k=1}^{j}\beta_{jk}\big(\ip{\vec{u}_{j}, \sg{\vec{u}_{k}}} - \delta_{jk}\big)
    + b \sum_{j=1}^{d}\sum_{k=1}^{j}\big(\ip{\vec{u}_{j}, \sg{\vec{u}_{k}}} - \delta_{jk}\big)^2\,,
\end{align}
where $\dual=[\beta_{1,1},\beta_{2,1},\beta_{2,2},\cdots,\beta_{d,1},\cdots,\beta_{d,d}]\in\RR^{d(d+1)/2}$ is a vector containing all of the dual variables. There are two reasons to introduce the additional linear penalty, which at first glance do not seem to contribute anything that the quadratic one is not adding already. First, for an appropriately chosen $b$, the equilibria of the max-min objective (\ref{AL}) corresponds exactly to permutations of the smallest Laplacian eigenvectors, and only the sorted eigenvectors are a stable solution under gradient ascent-descent dynamics. Second, the optimal dual variables $\dual^*$ are proportional to the smallest Laplacian eigenvalues, meaning that with this single objective one can recover naturally \textbf{both} eigenvectors and eigenvalues of $\textbf{L}$ (see the next section for the formalization of these claims). {\color{black} Moreover, the calculation of the linear penalty is a byproduct of the original quadratic one. This means that the computational complexity is practically unaltered, beyond having to update (and store) $\mathcal{O}(d^2)$ dual parameters, which is negligible considering the number of parameters in neural networks.}

Something to note is that the standard augmented Lagrangian has been discussed in the literature as a potential approach for learning eigenvectors of linear operators, but dismissed due to lack of empirical stability \citep{DBLP:conf/iclr/PfauPABS19}. \textbf{ALLO overcomes this problem} through the introduction of the stop-gradient operators, which are responsible for breaking the symmetry of eigenvector rotations, in a similar way as how gradient masking is used in spectral inference networks~\citep{DBLP:conf/iclr/PfauPABS19}.




\paragraph{Barrier Dynamics.}
For the introduced max-min objective to work, in theory, $b$ has to be larger than a \textbf{finite} value that depends on $\la$. Moreover, if $f(\pp)$ in the definition of $\la$ is a stochastic matrix, which is the case for all definitions mentioned previously, one can exactly determine a lower bound for $b$, as proved in the next section. In practice, however, we found that increasing $b$ is helpful for final performance. In our experiments, we do so in a gradient ascent fashion, just as with the dual variables.

\section{Theoretical results}\label{sec:theory}
To prove the soundness of the proposed max-min objective, we need to show two things: 1) that the equilibria of this objective correspond to the desired eigensystem of the Laplacian, and 2) that this equilibria is stable under stochastic gradient ascent-descent dynamics. 


As an initial motivation, the following lemma deals with the first point in the \textit{stationary setting}. While it is known that the solution set for the graph drawing optimization problem (\ref{OP}) corresponds to the rotations of the smallest eigenvectors of $\la$, the Lemma considers a primal-dual perspective of the problem that allows one to relate the dual variables with the eigenvalues of $\la$ (see the Appendix for a proof). This identification is relevant since previous methods are not able to recover the eigenvalues. 

\begin{lemma}\label{lem1}
Consider a symmetric matrix $\la\in\RR^{|\cS|\times|\cS|}$ with increasing, and possibly repeated, eigenvalues $\lambda_1\le\cdots\le\lambda_{|\cS|}$, and a corresponding sequence of eigenvectors $(\ei_i)_{i=1}^{|\cS|}$. Then, given a number of components, $1\le d\le|\cS|$, the pair $(\vec{u}_i^{*})_{i=1}^d,(\beta_{jk}^*)_{1\le k\le j\le d}$, where $\vec{u}_i^{*}=\ei_i$ and $\beta_{jk}^*=-\lambda_{j}\delta_{jk}$, is a solution to the primal-dual pair of optimization problems corresponding to the spectral graph drawing optimization problem (\ref{OP}). Furthermore, any other primal solution corresponds to a rotation of the eigenvectors $(\ei_i)_{i=1}^d$.
\end{lemma}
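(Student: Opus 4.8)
The plan is to verify the claimed primal–dual pair directly and then establish the ``furthermore'' clause via the standard spectral argument. First I would write down the Lagrangian of the constrained problem (\ref{OP}): since the constraints are $\ip{\vec u_j,\vec u_k}-\delta_{jk}=0$ for $1\le k\le j\le d$, the Lagrangian is
\begin{align}\label{plan:lag}
\mathcal{L}\big((\vec u_i),(\beta_{jk})\big)=\sum_{i=1}^d\ip{\vec u_i,\la\vec u_i}+\sum_{j=1}^d\sum_{k=1}^j\beta_{jk}\big(\ip{\vec u_j,\vec u_k}-\delta_{jk}\big).
\end{align}
I would then compute $\nabla_{\vec u_i}\mathcal{L}$, being careful that $\vec u_i$ appears both as the ``$j$'' index and the ``$k$'' index in the double sum. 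Using symmetry of $\la$, the stationarity condition $\nabla_{\vec u_i}\mathcal{L}=0$ reads $2\la\vec u_i+2\beta_{ii}\vec u_i+\sum_{k<i}\beta_{ik}\vec u_k+\sum_{j>i}\beta_{ji}\vec u_j=0$. Plugging in $\vec u_i^\ast=\ei_i$ and $\beta_{jk}^\ast=-\lambda_j\delta_{jk}$, the off-diagonal terms vanish and the diagonal term gives $2\la\ei_i-2\lambda_i\ei_i=0$, which holds by definition of an eigenpair. Primal feasibility $\ip{\ei_j,\ei_k}=\delta_{jk}$ holds because we may take the eigenvectors orthonormal (choosing an orthonormal basis within each repeated-eigenvalue eigenspace). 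This shows the pair is a KKT point; since (\ref{OP}) has the smallest eigenvectors as a global minimizer (the $d=2$ case is in \citet{koren2003spectral}, and the general $d$ is exactly what we are also proving here), and strong duality / the fact that it is a stationary feasible point of a problem whose minimum is attained there, $(\vec u_i^\ast),(\beta_{jk}^\ast)$ is a primal–dual solution.

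For the ``furthermore'' clause — that every primal optimizer is a rotation of $(\ei_i)_{i=1}^d$ — I would give the classical Courant–Fischer / trace argument. Any feasible $(\vec u_i)_{i=1}^d$ has orthonormal columns, so $U=[\vec u_1,\dots,\vec u_d]$ satisfies $U^\top U=\mI_d$ and the objective equals $\Tr(U^\top\la U)$. Minimizing $\Tr(U^\top\la U)$ over all $|\cS|\times d$ matrices with orthonormal columns is the Ky Fan problem; its minimum value is $\sum_{i=1}^d\lambda_i$, attained exactly when the column space of $U$ equals a sum of eigenspaces whose eigenvalues are the $d$ smallest (with the usual care when $\lambda_d=\lambda_{d+1}$, in which case the attaining subspaces form a continuum but are still spanned by eigenvectors). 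Hence $U=E R$ where $E=[\ei_1,\dots,\ei_d]$ and $R\in O(d)$ — i.e., a rotation (orthogonal transformation) of the smallest eigenvectors — which is exactly the claim. I would cite \citet{wang2021towards} for the analogous statement in the abstract setting and note the finite-dimensional proof is the Ky Fan theorem.

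I expect the main obstacle to be purely bookkeeping rather than conceptual: getting the gradient of the asymmetric double sum $\sum_{j=1}^d\sum_{k=1}^j\beta_{jk}(\ip{\vec u_j,\vec u_k}-\delta_{jk})$ right, since each $\vec u_i$ occurs with the triangular index structure, and then checking that the specific diagonal choice $\beta_{jk}^\ast=-\lambda_j\delta_{jk}$ makes all the cross terms cancel. A secondary subtlety is the degenerate case of repeated eigenvalues: there the ``sequence of eigenvectors'' must be understood as a fixed orthonormal choice, and the uniqueness-up-to-rotation statement is genuinely up to rotation within degenerate eigenspaces as well, so I would state that caveat explicitly. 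Everything else — feasibility, the trace reformulation, invoking Ky Fan — is routine.
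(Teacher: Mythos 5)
Your proposal is correct and the verification of the KKT/stationarity part is essentially the same as the paper's: you compute $\nabla_{\vec u_i}\mathcal L = 2\la\vec u_i + 2\beta_{ii}\vec u_i + \sum_{k<i}\beta_{ik}\vec u_k + \sum_{j>i}\beta_{ji}\vec u_j$, which matches the paper's Equation (\ref{STAT1}), and you correctly verify that $(\ei_i, -\lambda_j\delta_{jk})$ makes it vanish while satisfying feasibility.

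For the ``furthermore'' clause you take a genuinely different route. The paper does not invoke the Ky Fan theorem; instead it uses the stationarity condition as a \emph{necessary} condition on any primal optimizer to derive that $\la$ maps $\sp(\vec u_i^*)_{i=1}^d$ into itself, then proves a self-contained auxiliary result (Proposition \ref{proof-eigenspaces}) stating that a $\la$-invariant $d$-dimensional subspace must coincide with the span of $d$ eigenvectors, and finally minimizes $\sum_i\lambda_{\sigma(i)}$ over permutations to pick out the $d$ smallest. Your approach instead drops to the trace formulation $\Tr(U^\top\la U)$ over the Stiefel manifold and cites the Ky Fan characterization of minimizers, handling the degenerate case $\lambda_d=\lambda_{d+1}$ as a caveat. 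Both are sound. What you gain is brevity and a clean appeal to a standard result; what the paper gains is self-containedness (it does not need an external spectral-variational citation) and, importantly, reuse: Proposition \ref{proof-eigenspaces} is invoked again in the later arguments, so proving it from scratch is not wasted effort. One small caution: you mention ``strong duality,'' but the orthonormality constraints make (\ref{OP}) non-convex, so generic strong duality does not apply; your actual argument — feasibility, stationarity, plus Ky Fan to certify global optimality — does not need that phrase and is cleaner without it.
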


Now that we know that the primal-dual pair of optimization problems associated to (\ref{OP}) has as a solution the smallest eigensystem of the Laplacian, the following Lemma shows that the equilibria of the max-min objective (\ref{AL}) coincides only with this solution, up to a constant, and any possible permutation of the eigenvectors, \textbf{but not with its rotations}.

\begin{lemma}\label{dyn_eq}
The pair $\vec{u}^{*},\dual^{*}$ is an equilibrium pair of the max-min objective (\ref{AL}), under gradient ascent-descent dynamics, if and only if $\vec{u}^{*}$ coincides with a subset of eigenvectors of the Laplacian $(\ei_{\sigma(i)})_{i=1}^d$, for some permutation $\sigma:\cS\to\cS$, and $\beta_{jk}^{*}=-2\lambda_{\sigma(j)}\delta_{jk}$.
\end{lemma}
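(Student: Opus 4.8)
The plan is to compute the gradients of the objective (\ref{AL}) with respect to each primal block $\vec{u}_i$ and each dual variable $\beta_{jk}$, set them to zero, and extract the stated characterization. First I would write down the stationarity conditions. Because of the stop-gradient operators, the gradient of the penalty and Lagrangian terms with respect to $\vec{u}_i$ only sees the factor $\vec{u}_i$ in the inner products $\ip{\vec{u}_j,\sg{\vec{u}_k}}$ where $j=i$ and $k\le i$; the terms with $j>i$ contribute nothing since $\vec{u}_i$ sits inside $\sg{\cdot}$ there. So $\partial/\partial\vec{u}_i$ of (\ref{AL}) gives
\begin{equation*}
2\la\vec{u}_i + \sum_{k=1}^{i}\beta_{ik}\vec{u}_k + 2b\sum_{k=1}^{i}\big(\ip{\vec{u}_i,\vec{u}_k}-\delta_{ik}\big)\vec{u}_k = \vzero\,,
\end{equation*}
and stationarity in $\dual$ (the ascent direction being zero) forces the constraints $\ip{\vec{u}_j,\vec{u}_k}=\delta_{jk}$ for all $1\le k\le j\le d$, i.e. $(\vec{u}_i)_{i=1}^d$ is orthonormal. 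Substituting orthonormality back, the bracketed penalty terms vanish, leaving the clean recursion $2\la\vec{u}_i = -\sum_{k=1}^{i}\beta_{ik}\vec{u}_k$ for each $i$.

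Next I would argue inductively on $i$ that each $\vec{u}_i$ is an eigenvector of $\la$ and each $\beta_{ik}$ with $k<i$ vanishes. For $i=1$: $2\la\vec{u}_1=-\beta_{11}\vec{u}_1$, so $\vec{u}_1$ is a unit eigenvector with eigenvalue $-\beta_{11}/2$. For the inductive step, suppose $\vec{u}_1,\dots,\vec{u}_{i-1}$ are orthonormal eigenvectors. Take the inner product of the $i$-th stationarity equation with $\vec{u}_m$ for each $m<i$: the left side is $2\ip{\la\vec{u}_i,\vec{u}_m}=2\ip{\vec{u}_i,\la\vec{u}_m}=2\lambda_{?}\ip{\vec{u}_i,\vec{u}_m}=0$ by symmetry of $\la$, the eigenvector property of $\vec{u}_m$, and orthonormality; the right side is $-\beta_{im}$ (again using orthonormality). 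Hence $\beta_{im}=0$ for all $m<i$, and the recursion collapses to $2\la\vec{u}_i=-\beta_{ii}\vec{u}_i$, so $\vec{u}_i$ is a unit eigenvector with eigenvalue $-\beta_{ii}/2$. This establishes that any equilibrium consists of $d$ orthonormal eigenvectors of $\la$, i.e. $\vec{u}_i^*=\ei_{\sigma(i)}$ for some injection $\sigma$ (extendable to a permutation of $\cS$), with $\beta_{jk}^*=-2\lambda_{\sigma(j)}\delta_{jk}$. The converse direction — that any such configuration is indeed an equilibrium — is immediate by plugging back into the stationarity equations: orthonormality kills the quadratic terms, and $2\la\ei_{\sigma(i)}=2\lambda_{\sigma(i)}\ei_{\sigma(i)}=-\beta_{ii}^*\ei_{\sigma(i)}$ holds by construction, while the dual stationarity (the constraints) holds since the $\ei$'s are orthonormal.

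The main subtlety I anticipate is handling the stop-gradient bookkeeping correctly — being careful that the effective gradient in $\vec{u}_i$ receives contributions only from the $j=i$ row of the double sum (terms with $j>i$ are frozen, terms with $j<i$ don't involve $\vec{u}_i$ outside a stop-gradient) — and, relatedly, making sure the asymmetric upper limit $k\le j$ is used consistently so that the cross-terms $\beta_{ik}\vec{u}_k$ with $k<i$ appear but $\beta_{ki}$ with $k>i$ do not. A secondary point worth a sentence is the case of repeated eigenvalues: the argument above only uses symmetry of $\la$ and orthonormality, so it goes through verbatim even when $\lambda_{\sigma(i)}=\lambda_{\sigma(j)}$ for $i\neq j$; the characterization is in terms of a subset of an orthonormal eigenbasis, which is exactly what the statement asserts. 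Everything else is routine linear algebra, and the fact that "equilibrium" here means simultaneously a critical point of the min in $\vec{u}$ and of the max in $\dual$ is what pins down both the eigenvector structure and the precise value of the dual variables.
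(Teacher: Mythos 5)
Your proposal is correct and follows essentially the same route as the paper: compute the stop-gradient-aware stationarity conditions for each primal block $\vec{u}_i$ and for $\dual$, extract orthonormality from the dual block, and then induct on $i$ to show each $\vec{u}_i$ is a unit eigenvector of $\la$ with $\beta_{ik}=0$ for $k<i$ and $\beta_{ii}=-2\lambda_{\sigma(i)}$. Your inductive step is marginally slicker than the paper's — you take inner products of the $i$-th stationarity equation with $\vec{u}_m$ for $m<i$ to read off $\beta_{im}=0$ directly from symmetry of $\la$ and orthonormality, whereas the paper expands $\vec{u}_i$ in the eigenbasis of $\la$ and matches coefficients — but both arguments rest on exactly the same ingredients and yield the same conclusion.
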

\begin{proof}
Denoting $\mathcal{L}$ the objective (\ref{AL}), we have the gradient ascent-descent dynamical system:
\vspace{1pt}
\begin{align*}
    \vec{u}_i[t+1] &= 
    \vec{u}_i[t] - \alpha_{\text{primal}}\cdot\vec{g}_{\vec{u}_i}(\vec{u}[t],\dual[t])
    \,,\hspace{5pt}
    \forall 1\le i\le d, \\
    \beta_{jk}[t+1] &= 
    \beta_{jk}[t] + \alpha_{\text{dual}}\cdot\frac{\partial\mathcal{L}}{\partial \beta_{jk}}(\vec{u}[t],\dual[t])
    \,,\hspace{5pt}
    \forall 1\le k\le j\le d
    \,,
\end{align*}
where $t\in\NN$ is the time index, $ \alpha_{\text{primal}}, \alpha_{\text{dual}}>0$ are step sizes, and $\vec{g}_{\vec{u}_i}$ is the gradient of $\mathcal{L}$ with respect to $\vec{u}_i$, considering the stop-gradient operator. We avoid the notation $\nabla_{\vec{u}_i}\mathcal{L}$ to emphasize that $\vec{g}_{\vec{u}_i}$ is not a real gradient, but a direction that ignores what is inside the stop-gradient operator.

The equilibria of our system satisfy $\vec{u}_i^*[t+1] = \vec{u}^*_i[t]$ and $\beta_{jk}^*[t+1]=\beta_{jk}^*[t]$. Hence, 
\begin{align}
    \vec{g}_{\vec{u}_i}(\vec{u}^*, \dual^{*})
    &= 
    2\la\vec{u}_i^* 
    + \sum_{j=1}^{i}\beta_{ij}\vec{u}_j^*
    + 2b\sum_{j=1}^{i}(\ip{\vec{u}_i^*,\vec{u}_j^*}-\delta_{ij})\vec{u}_j^*
    = \boldsymbol{0}
    \,,\hspace{5pt}
    \forall 1\le i\le d, \label{eq_u} \\
    \frac{\partial\mathcal{L}}{\partial \beta_{jk}}(\vec{u}^*,\dual^{*}) &=
    \ip{\vec{u}_j^*,\vec{u}_k^*}-\delta_{jk}
    = 0
    \,,\hspace{5pt}
    \forall 1\le k\le j\le d
    \,.\label{eq_beta}
\end{align}

We proceed now by induction over $i$, considering that Equation (\ref{eq_beta}) tells us that $\vec{u}^*$ corresponds to an orthonormal basis. For the base case $i=1$ we have:
\vspace{1pt}
\begin{equation*}
    \vec{g}_{\vec{u}_1}(\vec{u}^*, \dual^{*})
    =
    2\la\vec{u}_1^* 
    + \beta_{1,1}\vec{u}_1^* = 0\,.
\end{equation*}

Thus, we can conclude that $\vec{u}_1$ is an eigenvector $\ei_{\sigma(1)}$ of the Laplacian, and that $\beta_{1,1}$ corresponds to its eigenvalue, specifically $\beta_{1,1}=-2\lambda_{\sigma(1)}$, for some permutation $\sigma:\cS\to\cS$. Now let us suppose that $\vec{u}_{j}=\ei_{\sigma(j)}$ and $\beta_{jk}=-2\lambda_{\sigma(j)}\delta_{jk}$ for $j< i$. Equation (\ref{eq_beta}) for $i$ then becomes:
\begin{equation*}
    \vec{g}_{\vec{u}_i}(\vec{u}^*, \dual^{*})
    =
    2\la\vec{u}_{i}^* 
    + \beta_{ii}\vec{u}_{i}^*
    + \sum_{j=1}^{i-1}\beta_{ij}\ei_{\sigma(j)}= 0\,.
\end{equation*}

In general, we can express $\vec{u}_{i}^*$ as the linear combination $\vec{u}_{i}^*=\sum_{j=1}^{|\cS|}c_{ij}\ei_{\sigma(j)}$ since the eigenvectors of the Laplacian form a basis. Also, given that $\ip{\vec{u}_j,\vec{u}_k}=0$, we have that $c_{ij}=0$ for $j<i$. Hence,
\begin{equation*}
    2\sum_{j=i}^{|\cS|}c_{ij}\la\ei_{\sigma(j)} 
    + \beta_{ii}\sum_{j=i}^{|\cS|}c_{ij}\ei_{\sigma(j)}
    + \sum_{j=1}^{i-1}\beta_{ij}\ei_{\sigma(j)}
    = 
    \sum_{j=i}^{|\cS|}c_{ij}(2\lambda_{\sigma(j)}+\beta_{ii})\ei_{\sigma(j)} 
    + \sum_{j=1}^{i-1}\beta_{ij}\ei_{\sigma(j)}
    = 0\,.
\end{equation*}

By orthogonality of the eigenvectors, we must have that each coefficient is $0$, implying that $\beta_{ij}=0$ and either $c_{ij}=0$ or $\beta_{ii}=-2\lambda_{\sigma(j)}$. The last equation allows us to conclude that a pair $(c_{ij},c_{ik})$ can only be different to $0$ simultaneously for $j,k$ such that $\lambda_{\sigma(j)}=\lambda_{\sigma(k)}$, i.e., $\vec{u}_i$ lies in the subspace of eigenvectors corresponding to the same eigenvalue, where each point is in itself an eigenvector. Thus, we can conclude, that $\vec{u}_i=\ei_{\sigma(i)}$ and $\beta_{ij}=-2\lambda_{i}\delta_{ij}$, as desired.
\end{proof}

As a corollary to Lemma \ref{dyn_eq}, let us suppose that we fix all the dual variables to $0$, i.e., $\beta_{jk}=0$. Then, we will obtain that the constraints of the original optimization problem (\ref{OP}) must be violated for any possible equilibrium point {\color{black} (see the Appendix for a proof)}. This explains why optimizing GGDO in Equation (\ref{GGDO}) may converge to undesirable rotations of the Laplacian eigenvectors, even when the smallest eigenvectors are the unique solution of the original constrained optimization problem.

\begin{corollary}\label{cor1}
The point $\vec{u}^{*}$ is an equilibrium point of objectives (\ref{GDO}) or (\ref{GGDO}), under gradient descent dynamics, if and only if, for any $1\le i\le d$, there exists a $1\le j\le d$ such that $\ip{\vec{u}_i^{*},\vec{u}_j^{*}}\neq\delta_{ij}$. That is, the equilibrium is guaranteed to be different to the eigenvectors of the Laplacian.
\end{corollary}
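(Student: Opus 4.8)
The plan is to compute the descent direction for GDO and GGDO explicitly at a candidate equilibrium and show it cannot vanish unless some constraint is violated. Since GDO is the special case of GGDO with all $c_i$ equal, I would work directly with GGDO: write $\mathcal{L}_{\text{GGDO}}(\vec{u}) = \sum_{i=1}^d c_i\ip{\vec{u}_i,\la\vec{u}_i} + b\sum_{j,k}\min(c_j,c_k)(\ip{\vec{u}_j,\vec{u}_k}-\delta_{jk})^2$, and compute $\nabla_{\vec{u}_i}\mathcal{L}_{\text{GGDO}} = 2c_i\la\vec{u}_i + 4b\sum_{k=1}^d \min(c_i,c_k)(\ip{\vec{u}_i,\vec{u}_k}-\delta_{ik})\vec{u}_k$ (note that because of the double sum over $j$ and $k$, each off-diagonal pair contributes twice, yielding the factor $4b$; the constant does not matter for the argument). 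Setting this to $\boldsymbol 0$ for all $i$ gives the equilibrium conditions, and the key point is that $\la$ is symmetric.

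The main step is a proof by contradiction: suppose $\vec{u}^*$ is an equilibrium \emph{and} satisfies all constraints $\ip{\vec{u}^*_i,\vec{u}^*_j}=\delta_{ij}$. Then every penalty term vanishes, so the equilibrium condition collapses to $2c_i\la\vec{u}^*_i = \boldsymbol 0$, i.e. $\la\vec{u}^*_i = \boldsymbol 0$ for each $i$ (since $c_i>0$). But this forces every $\vec{u}^*_i$ to lie in the kernel of $\la$, i.e. to be an eigenvector with eigenvalue $0$. For a generic Laplacian (in particular any $\la = \mI - f(\pp)$ with $f(\pp)$ stochastic and the chain connected, the kernel is one-dimensional), the eigenspace for eigenvalue $0$ has dimension strictly less than $d$ whenever $d\ge 2$, so we cannot have $d$ orthonormal vectors all in it — contradiction. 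Conversely, if some constraint is violated, I should check the statement's "if and only if" direction is vacuous in the sense that it just asserts \emph{every} equilibrium must violate a constraint; so really the corollary is the single implication "equilibrium $\Rightarrow$ some constraint violated," and the contrapositive argument above suffices. I would also invoke Lemma \ref{dyn_eq} with all $\beta_{jk}$ fixed to $0$ as an alternative, cleaner route: that lemma's equilibrium analysis, specialized to $\dual^* = \boldsymbol 0$, shows the stationarity equations become $2\la\vec{u}_i^* + 2b\sum_{j\le i}(\ip{\vec{u}_i^*,\vec{u}_j^*}-\delta_{ij})\vec{u}_j^* = \boldsymbol 0$, and tracing through the same induction shows a fully-orthonormal solution would need $\la\vec{u}_i^*=\boldsymbol 0$ for all $i$, which is impossible for $d\ge 2$.

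The main obstacle is handling the case $d=1$ and the precise relationship between GDO/GGDO equilibria and the structure of $\la$'s kernel: strictly speaking, if $d=1$, the single vector $\vec{u}_1$ could be the zero-eigenvalue eigenvector and satisfy $\ip{\vec{u}_1,\vec{u}_1}=1$, so the corollary as stated would be false for $d=1$ unless one rules this out (e.g. the Laplacian defined via the successor representation has strictly positive eigenvalues, or one assumes $d\ge 2$, or one notes $\vec{u}_1$ being the constant eigenvector is still a "rotation"—the trivial one—which the corollary's phrasing may intend to exclude). I would resolve this by noting the intended reading: the corollary contrasts with Lemma \ref{dyn_eq}, where nonzero $\dual^*$ is exactly what's needed to make the eigenvectors an equilibrium, so with $\dual^* = \boldsymbol 0$ the only way to be an equilibrium is to sit at a constraint-violating point (the "shifted" minimum of the quadratic penalty), and a short computation pinning down where $\nabla\mathcal{L}=\boldsymbol 0$ for the penalized objective—at points where $\ip{\vec{u}_i,\vec{u}_i}<1$ to balance the $\la$-term against the penalty gradient—completes the argument.
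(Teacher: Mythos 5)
Your argument follows essentially the same route as the paper: compute the GGDO gradient at a stationary point, observe that stationarity equates $2c_i\la\vec{u}_i$ with a linear combination of the $\vec{u}_j$ whose coefficients are proportional to the constraint residuals $\ip{\vec{u}_i,\vec{u}_j}-\delta_{ij}$, and conclude that a fully constraint-satisfying equilibrium would force $\la\vec{u}_i=\vzero$ for all $i$. Where you go further: the paper's proof simply asserts that $\la\vec{u}_i\in\sp(\vec{u}_1,\ldots,\vec{u}_d)$ ``is only possible if at least one coefficient is nonzero,'' which is not literally true---$\la\vec{u}_i=\vzero$ lies in any span. You correctly close that gap by invoking the one-dimensionality of $\ker\la$ (for a connected chain with stochastic $f(\pp)$), so that $d$ orthonormal vectors cannot all live there when $d\ge 2$. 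You also correctly flag the $d=1$ edge case as a genuine counterexample to the corollary as literally stated, which the paper does not address; one small inaccuracy in your proposed fix is that the successor-representation Laplacian $\la=\mI-(1-\lambda)\SR^{\lambda}$ does \emph{not} have strictly positive spectrum---since $\pp\vone=\vone$ one gets $\SR^\lambda\vone=\tfrac{1}{1-\lambda}\vone$ and hence $\la\vone=\vzero$---so the honest resolution is just to read the corollary as applying for $d\ge 2$.

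Two more minor observations. First, your contrapositive establishes ``equilibrium $\Rightarrow\exists\, i,j$ with $\ip{\vec{u}_i,\vec{u}_j}\ne\delta_{ij}$,'' which is weaker than the stated ``$\forall i\,\exists j$''; the per-$i$ form needs exactly the kernel caveat you identified (at most one orthonormal $\vec{u}_i$ can be kernel-aligned, so for $d\ge2$ the remaining ones each force a violated constraint involving themselves). Second, the alternative route you mention---specializing Lemma~\ref{dyn_eq} to $\dual=\vzero$---would give the gradient of ALLO, which has the stop-gradient and inner sum running only to $i$, not the gradient of GGDO; the conclusion is analogous but the objects differ, so the direct GGDO computation you lead with is the correct one for this corollary. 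Your reading of the ``if and only if'' as shorthand for the forward implication is also right: constraint violation alone clearly cannot imply stationarity.
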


Finally, we prove that even when all permutations of the Laplacian eigenvectors are equilibrium points of the proposed objective (\ref{AL}), only the one corresponding to the ordered smallest eigenvectors and its eigenvalues is stable {\color{black}(see Appendix \ref{ap:permutation} for empirical confirmation)}, in contrast with GGDO.

\begin{theorem}\label{stable}
The only permutation in Lemma \ref{dyn_eq} that corresponds to an stable equilibrium point of the max-min objective (\ref{AL}) is the identity permutation, under an appropriate selection of the barrier coefficient $b$, if the highest eigenvalue multiplicity is 1. That is, there exist a finite barrier coefficient such that $\vec{u}_i^{*}=\ei_i$ and $\beta_{jk}^{*}=-2\lambda_{j}\delta_{jk}$ correspond to the only stable equilibrium pair, where $\lambda_i$ is the $i-$th smallest eigenvalue of the Laplacian and $\ei_i$ its corresponding eigenvector. In particular, any $b>2$ guarantees stability.
\end{theorem}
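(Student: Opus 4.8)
The plan is to linearize the gradient ascent--descent dynamics of $\mathcal{L}$ (objective (\ref{AL})) about an equilibrium pair from Lemma~\ref{dyn_eq}, i.e., $\vec{u}_i^{*}=\ei_{\sigma(i)}$ and $\beta_{jk}^{*}=-2\lambda_{\sigma(j)}\delta_{jk}$, and to determine for which permutations $\sigma$ and which $b$ every eigenvalue of the linearization has negative real part, which is exactly the condition for local asymptotic stability. Writing $\vec{u}_i=\ei_{\sigma(i)}+\eps\vec{v}_i$ and $\beta_{jk}=\beta_{jk}^{*}+\eps\gamma_{jk}$, I would substitute into the update directions $\vec{g}_{\vec{u}_i}$ and $\partial\mathcal{L}/\partial\beta_{jk}$ computed in the proof of Lemma~\ref{dyn_eq}, keep the $O(\eps)$ terms, and expand each perturbation in the Laplacian eigenbasis as $\vec{v}_i=\sum_{m}a_{im}\,\ei_{\sigma(m)}$. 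This produces a linear system in the coordinates $\{a_{im}\}_{1\le i\le d}$ and $\{\gamma_{jk}\}_{1\le k\le j\le d}$.

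The crucial structural fact --- and the step I expect to demand the most care --- is that the stop-gradient operators and the asymmetric inner sum $\sum_{k=1}^{j}$ in (\ref{AL}) render this linear system block-triangular with blocks of size at most two, so that its characteristic polynomial factors and the spectrum can be read off directly. Concretely, I expect the eigenvalues of the linearization to be exactly: $2\bigl(\lambda_{\sigma(i)}-\lambda_{\sigma(m)}\bigr)$ for every $1\le i\le d$ and $i<m\le|\cS|$ (these arise from the coordinates $a_{im}$ with $m>i$, which feed into other coordinates but are fed by none, hence evolve autonomously); the two roots of $\mu^{2}+4b\mu+2$ for each $1\le i\le d$ (from the pair $(a_{ii},\gamma_{ii})$); and the two roots of $\mu^{2}+(2b-2\delta_{pq})\mu+1$ for each $1\le p<q\le d$, where $\delta_{pq}=\lambda_{\sigma(q)}-\lambda_{\sigma(p)}$ (from the pair $(a_{qp},\gamma_{qp})$ once the decoupled coordinate $a_{pq}$ has been eliminated). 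Verifying this list --- tracking every coupling term through the stop-gradients and the triangular summation limit, and checking that scaling the primal and dual coordinates by their positive step sizes leaves the sign of each eigenvalue's real part intact (immediate for these tiny blocks) --- is the bulk of the argument. Given the list, the stability conditions are read off immediately: the blocks $\mu^{2}+4b\mu+2$ are stable for all $b>0$; the block $\mu^{2}+(2b-2\delta_{pq})\mu+1$ has constant term $1>0$ and is stable iff $b>\delta_{pq}$; and the autonomous mode $a_{im}$ is stable iff $\lambda_{\sigma(m)}>\lambda_{\sigma(i)}$.

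Finally, I would combine these conditions. Requiring $\lambda_{\sigma(m)}>\lambda_{\sigma(i)}$ for all $1\le i\le d$ and $i<m\le|\cS|$ forces, via the multiplicity-one hypothesis and a short induction on $i$ (pin down $\sigma(1)$ as the index of the global minimizer of $\lambda_{\sigma(\cdot)}$, then $\sigma(2)$, and so on), that $\sigma(i)=i$ for $1\le i\le d$, i.e., $\vec{u}_i^{*}=\ei_i$; so the sorted permutation is the unique candidate for a stable equilibrium, and every other equilibrium from Lemma~\ref{dyn_eq} has an eigenvalue with positive real part regardless of $b$. For the sorted permutation, $\delta_{pq}=\lambda_q-\lambda_p>0$ for $p<q\le d$, so the only remaining requirement is $b>\delta_{pq}$ for all such $p,q$, i.e., $b>\lambda_d-\lambda_1$ --- a finite bound, which moreover is at most $2$ because $\la=\mat{I}-f(\pp)$ with $f(\pp)$ symmetric stochastic (the case for every Laplacian used here) has spectrum contained in $[0,2]$. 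Hence for any $b>2$ the pair $\vec{u}_i^{*}=\ei_i$, $\beta_{jk}^{*}=-2\lambda_j\delta_{jk}$ is the unique stable equilibrium, as claimed.
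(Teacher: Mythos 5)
Your proposal is essentially the paper's own proof: linearize the ascent--descent dynamics at the Lemma~\ref{dyn_eq} equilibria, expand perturbations in the Laplacian eigenbasis, exploit the block-triangular structure induced by the stop-gradients and the triangular sum limit to reduce the Jacobian to $1\times1$ and $2\times2$ blocks, and read off the same eigenvalue list the paper derives (up to the opposite sign convention, since you linearize the continuous-time flow while the paper examines the Jacobian of the discrete descent direction). The only slip is arithmetic: the characteristic polynomial of the $(a_{ii},\gamma_{ii})$ block should be $\mu^2+4b\mu+4$, not $\mu^2+4b\mu+2$ (the relevant $2\times2$ block has determinant $4$), but this changes nothing since both are Hurwitz for every $b>0$.
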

\begin{proof}[\textnormal{\textit{Proof Sketch}}] 
    Complete proof is in the Appendix.
    We have that $\vec{g}_{\vec{u}_i}$ and $\nicefrac{\partial\lag}{\partial\beta_{jk}}$ define the chosen ascent-descent direction. Concatenating these vectors and scalars in a single vector $\vec{g}(\vec{u},\dual)$, the stability of the dynamics can be determined from the Jacobian matrix $J(\vec{g})$. Specifically, if all the eigenvalues of this matrix have a positive real part in the equilibrium pair $\vec{u}^{*},\dual^{*}$, we can conclude that the equilibrium is stable. If there is one eigenvalue with negative real part, then it is unstable \citep[see][]{chicone2006ordinary, sastry2013nonlinear, mazumdar2020gradient}. As proved in the Appendix, for any pair $1\le i<j\le|\cS|$, there exists a real eigenvalue proportional to $\lambda_{\sigma(j)}-\lambda_{i}$. This means that, unless the $\sigma$ permutation is the identity, there will be at least one negative eigenvalue and the equilibrium corresponding to this permutation will be unstable.
\end{proof}



\section{Experiments}\label{sec:experiments}
We evaluate three different aspects of the proposed max-min objective: eigenvector accuracy, eigenvalue accuracy, and the necessity of each of the components of the proposed objective.\footnote{Accompanying code is available here: \url{https://github.com/tarod13/laplacian_dual_dynamics}.}


\begin{wrapfigure}{r}{0.5\textwidth}
\begin{center}
\vspace{-0.7cm}
\includegraphics[width=\linewidth]{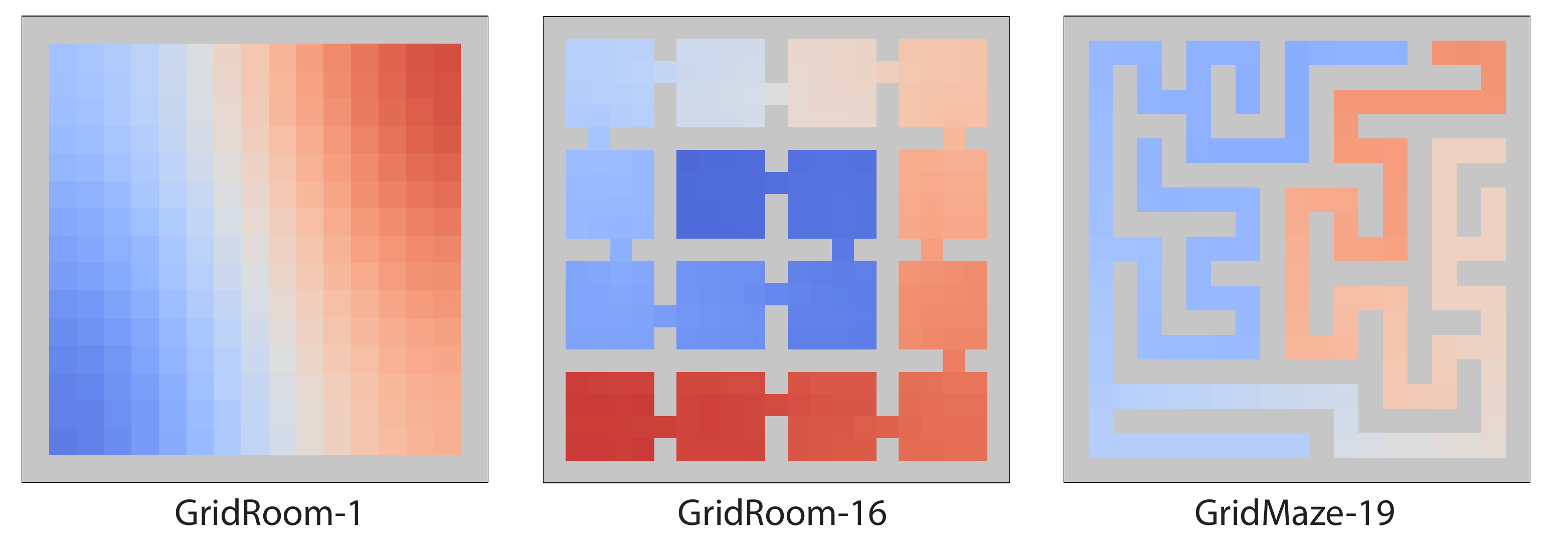}
\end{center}
\vspace{-0.3cm}
\caption{Grid environments. Color is the 2nd smallest Laplacian eigenvector learned by ALLO.}
\vspace{-0.3cm}
\label{fig:eigvecs}
\end{wrapfigure}

\paragraph{Eigenvector Accuracy.}
We start by considering the grid environments shown in Figure~\ref{fig:eigvecs}. We generate $200,000$ transition samples in each of them from a uniform random policy and a uniform initial state distribution. We use the $(x,y)$ coordinates as inputs to a fully-connected neural network $\phi_{\param}:\RR^2\to\RR^d$, parameterized by $\param$, with 3 layers of 256 hidden units to approximate the $d-$dimensional Laplacian representation $\phi$, where $d=11$. The network is trained  using stochastic gradient descent with our objective \citep[see][]{wu2018laplacian}, for the same initial barrier coefficients as in Figure~\ref{fig:brittle_ggdo}.

Figure \ref{fig:robust_al} shows the average cosine similarity of eigenvectors found using ALLO compared to the true Laplacian eigenvectors.  In all three environments, it learns close approximations of the smallest $d-$eigenvectors in fewer gradient updates than GGDO (see Figure~\ref{fig:brittle_ggdo}) and without a strong dependence on the chosen barrier coefficients. 

\begin{figure}[t]
\begin{center}
\includegraphics[width=0.9\linewidth]{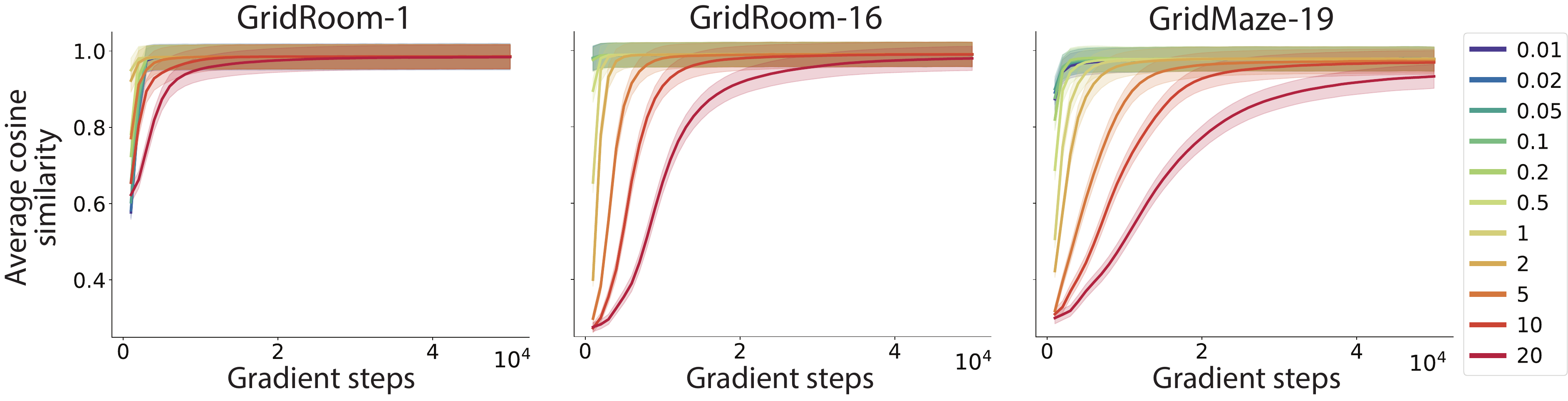}
\end{center}
\vspace{-0.3cm}
\caption{Average cosine similarity between the true Laplacian and ALLO for different initial values of the barrier coefficient $b$, averaged over 60 seeds, with the best coefficient highlighted.  The shaded region corresponds to a 95\% confidence interval.}
\label{fig:robust_al}
\vspace{-0.3cm}
\end{figure}


As a second and more conclusive experiment, we select the barrier coefficient with the best performance for GGDO across the three previous environments ($b=2.0$), and the best barrier increasing rate, $\alpha_{\text{barrier}}$, for our method across the same environments ($\alpha_{\text{barrier}}=0.01$). Then, we use these values to learn the Laplacian representation in 12 different grid environments, each with different number of states and topology (see Figure \ref{fig:eigvecs_all_envs} in the Appendix), {\color{black} and we consider both the $(x,y)$ and pixel-based representations (each tile of the grid corresponds to a pixel)}, with 1 million samples.

Figure \ref{fig:comparison}A compares the average cosine similarities obtained with each method {\color{black} for the $(x,y)$ representation (see detailed results for both representations in Appendix \ref{ap:cs_comparison})}. In particular, it shows the mean difference of the average cosine similarities across 60 seeds. Noticeably, the baseline fails completely in the two smallest environments (i.e., \texttt{GridMaze-7} and \texttt{GridMaze-9}), and it also fails partially in the two largest ones (i.e., \texttt{GridMaze-32} and \texttt{GridRoom-64}). In contrast, ALLO finds close approximations of the true Laplacian representation across all environments, with the exception of \texttt{GridRoomSym-4}, where it still found a more accurate representation than GGDO. These results are statistically significant for 9 out of 12  environments, with a p-value threshold of 0.01 (see Table \ref{tab:comparison} in the Appendix). Again, this suggests that the proposed objective is successful in removing the untunable-hyperparameter dependence observed in GGDO.


\paragraph{Eigenvalue Accuracy.}
The dual variables of ALLO should capture the eigenvalues of their associated eigenvectors.  Here, we quantify how well they approximate the true eigenvalues in the same 12 grid environments as in Figure \ref{fig:comparison}A. In particular, we compare our eigenvalue accuracy against those found with a simple alternative method \citep{wang2022reachability}, based on GGDO and on Monte Carlo approximations. Figure \ref{fig:comparison}B shows that the average relative error for the second to last eigenvalues, meaning all except one, is consistently larger across all environments when using the alternative approach, with a significance level of $0.01$. This is not surprising given the poor results in eigenvector accuracy for GGDO. However, in several environments the error is high even for the smallest eigenvalues, despite GGDO approximations being relatively more accurate for the associated eigenvectors. Across environments and across the eigenspectrum, our proposed objective provides more accurate estimates of the eigenvalues {\color{black} (see Appendix \ref{ap:eigenvalues} for the exact values)}.

\paragraph{Ablations.}
ALLO has three components that are different from GGDO: (1)~the \textbf{stop-gradient} as a mechanism to break the symmetry, (2)~the \textbf{dual variables} that penalize the linear constraints and from which we extract the eigenvalues of the graph Laplacian, and (3) the mechanism to   monotonically \textbf{increase the barrier coefficient} that scales the quadratic penalty. Our theoretical results suggest that the stop-gradient operation and the dual variables are necessary, while increasing the barrier coefficient could be helpful, eventually eliminating the need for the dual variables if all one cared about was to approximate the eigenvectors of the graph Laplacian, not its eigenvalues. In this section, we perform ablation studies to validate whether these insights translate into practice when using neural networks to minimize our objective. Specifically, in \texttt{GridMaze-19}, we compare the average cosine similarity of ALLO, with the same objective but without dual variables, and with GGDO, which does not use dual variables, nor the stop gradient, nor the increasing coefficients. For completeness, we also evaluate GGDO objective with increasing coefficients.

\begin{figure}[t]
\begin{center}
\includegraphics[width=\linewidth]{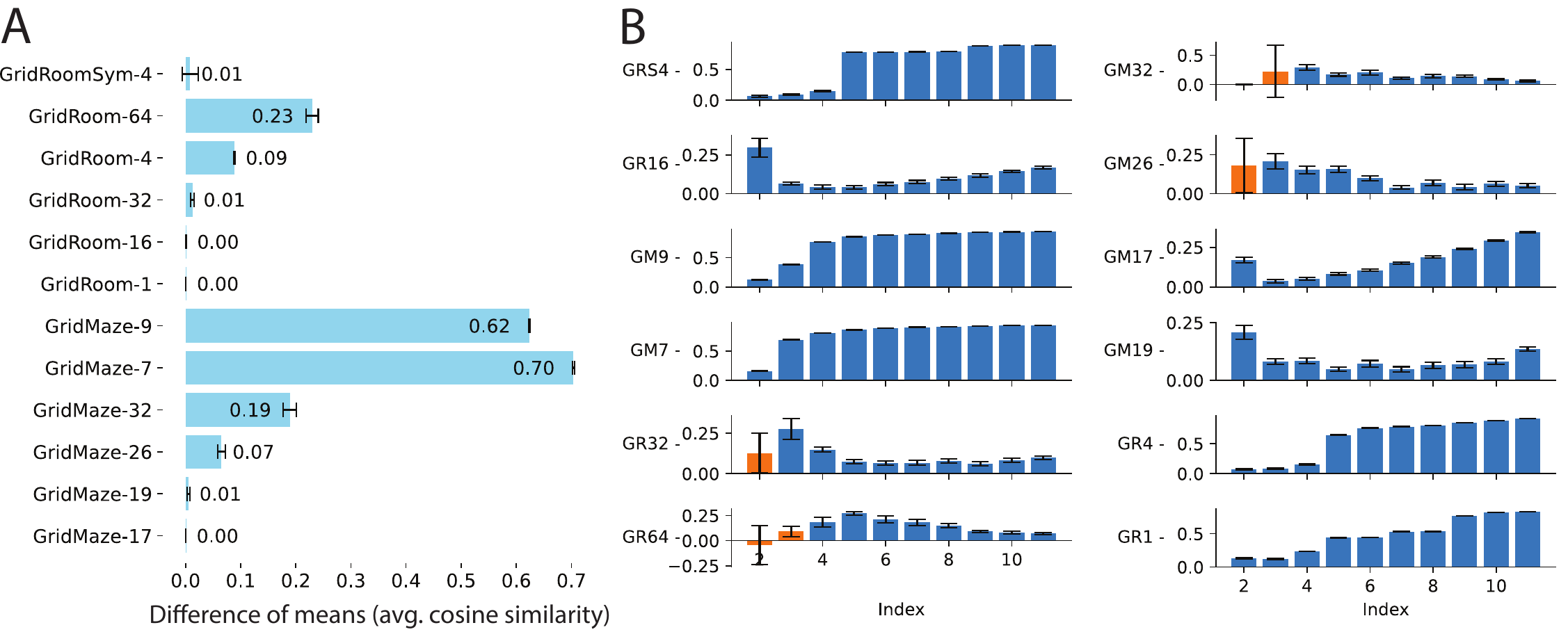}
\end{center}
\vspace{-0.3cm}
\caption{Difference of cosine similarities when approximating eigenvectors (A), and of relative errors for eigenvalues (B).  Error bars show the standard deviation of the differences. \texttt{GR} and \texttt{GM} stand for \texttt{GridRoom} and \texttt{GridMaze}. black bars correspond to p-values below $0.01$.}
\label{fig:comparison}
\vspace{-0.3cm}
\end{figure}



The curves in each panel of Figure \ref{fig:ablation} represent the different methods we evaluate, while the different panels evaluate the impact of different rates of increase of the barrier coefficient. Our results show that increasing the barrier coefficients is indeed important, and not increasing it (as in GDO and GGDO) actually prevents us from obtaining the true eigenvectors. It is also interesting to observe that the rate in which we increase the barrier coefficient matters empirically, but it does not prevent our solution to obtain the true eigenvectors. The importance of the stop gradient is evident when one looks at the difference in performance between GGDO and ALLO (and variants), particularly when not increasing the barrier coefficients. Finally, it is interesting to observe that the addition of the dual variables, which is essential to estimate the eigenvalues of the graph Laplacian, does not impact the performance of our approach. Based on our theoretical results, we conjecture the dual variables add stability to the learning process in larger environments, but we leave this for future work.

\begin{figure}[h]
\centering
\includegraphics[width=0.9\linewidth]{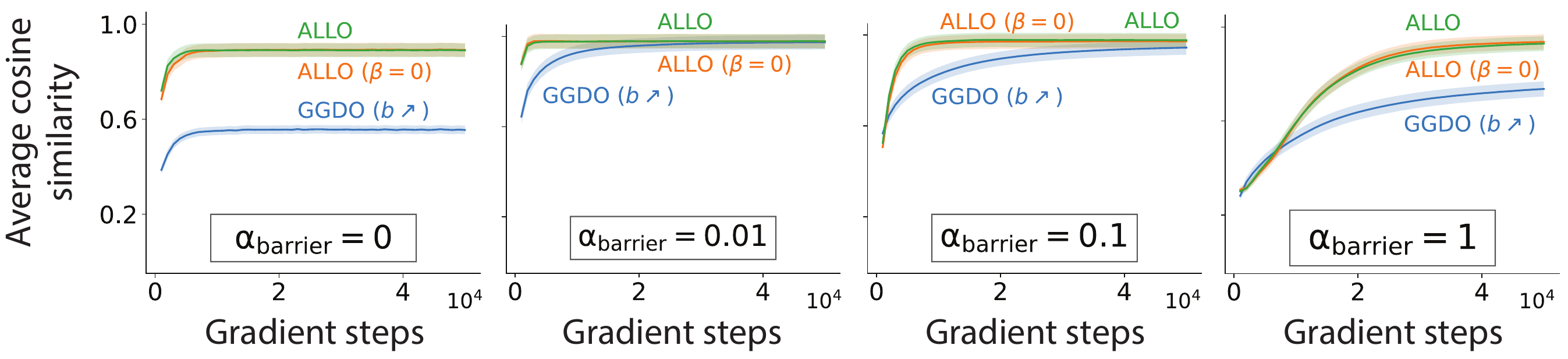}
\vspace{-0.3cm}
\caption{Average cosine similarity for different objectives in the environment \texttt{GridMaze-19}, for initial barrier coefficient $b=0.1$, and for different barrier increase rates $\alpha_{\text{barrier}}$.}
\label{fig:ablation}
\vspace{-0.3cm}
\end{figure}


\section{Conclusion}\label{sec:conclusion}
In this paper we introduced a theoretically sound min-max objective that makes use of stop-gradient operators to turn the Laplacian representation into the unique stable equilibrium point of a gradient ascent-descent optimization procedure. We showed empirically that, when applied to neural networks, the objective is robust to the same untunable hyperparameters that affect alternative objectives across environments with diverse topologies. In addition, we showed how the objective results in a more accurate estimation of the Laplacian eigenvalues when compared to alternatives. 

As future work, it would be valuable to better understand the theoretical impact of the barrier coefficient in the optimization process. Since we can now obtain the eigenvalues of the graph Laplacian, it would be also interesting to see how they could be leveraged, e.g., as an emphasis vector for feature representations or as a proxy for the duration of temporally-extended actions discovered from the Laplacian. Finally, it would be exciting to see the impact that having access to a proper approximation of the Laplacian will have in algorithms that rely on it~\citep[e.g.,][]{wang2022reachability,klissarov2023deep}.

\subsubsection*{Acknowledgments}
We thank Alex Lewandowski for helpful discussions about the Laplacian representation, Martin Klissarov for providing an initial version of the baseline (GGDO), and Adrian Orenstein for providing useful references on augmented Lagrangian techniques. The research is supported in part by the Natural Sciences and Engineering Research Council of Canada (NSERC), the Canada CIFAR AI Chair Program, and the Digital
Research Alliance of Canada.

\bibliography{iclr2024_conference}
\bibliographystyle{iclr2024_conference}

\appendix

\section{Additional theoretical derivations}
\subsection{Proof of Lemma \ref{lem1}}\label{proof:lem1}
\begin{proof}
Let $\beta_{jk}\in\RR$ denote the dual variables associated to the constraints of the optimization problem (\ref{OP}), and $\mathcal{L}$ be the corresponding Lagrangian function:
\begin{equation*}
    \lag(\vecl{u}{i}{},(\beta_{jk})_{k\le j}) 
    := 
    \sum_{i=1}^{d}\ip{\vec{u}_i,\la\vec{u}_i} 
    + \sum_{j=1}^{d}\sum_{k=1}^{j}\beta_{jk}(\ip{\vec{u}_j,\vec{u}_k}-\delta_{jk})\,.
\end{equation*}
Then, any pair of solutions $\vecl{u}{i}{*}, (\beta_{jk}^{*})_{k\le j}$ must satisfy the Karush-Kunh-Tucker conditions. In particular, the gradient of the Lagrangian should be 0 for both primal and dual variables:
\begin{align}
    \nabla_{\vec{u}_i}\mathcal{L} (\vecl{u}{i}{*},(\beta_{jk}^{*})_{k\le j})
    &=
    2\la\vec{u}_i^{*} 
    + 2\beta_{ii}^{*}\vec{u}_i^{*} 
    + \sum_{k=1}^{i-1}\beta_{ik}^{*}\vec{u}_k^{*} 
    + \sum_{j=i+1}^{d}\beta_{ji}^{*}\vec{u}_j^{*}
    =0
    \,,\hspace{5pt}\forall\, 1\le i\le d \,; \label{STAT1}\\[0.2cm]
    \nabla_{\beta_{jk}}L (\vecl{u}{i}{*},(\beta_{jk}^{*})_{k\le j})
    &=
    \ip{\vec{u}_j^{*},\vec{u}_k^{*}}-\delta_{jk}
    =0
    \,,\hspace{5pt}\forall\, 1\le k\le j\le d \,.\label{FEAS1}
\end{align}
The Equation (\ref{FEAS1}) does not introduce new information since it only asks again for the solution set $(\vec{u}_i^{*})_{i=1}^{d}$ to form an orthonormal basis. Equation (\ref{STAT1}) is telling us something more interesting. It asks $\la\vec{u}_i^{*}$ to be a linear combination of the vectors $(\vec{u}_i^{*})_{i=1}^d$, i.e., it implies that $\la$ always maps $\vec{u}_i^{*}$ back to the space spanned by the basis. Since this is true for all $i$, the span of $(\vec{u}_i^{*})_{i=1}^{d}$ must coincide with the span of the eigenvectors $(\vec{e}_{\sigma(i)})_{i=1}^{d}$, for some permutation $\sigma:\cS\to\cS$, as proved in Proposition \ref{proof-eigenspaces}, also in the Appendix. Intuitively, if this was not the case, then the scaling effect of $\lambda_j$ along some $\vec{e}_{\sigma(j)}$ would take points that are originally in $\sp(\vec{u}_i^{*})_{i=1}^{d}$ outside of this hyperplane.

Since we know that the span of the desired basis is $\sp(\vec{e}_{\sigma(i)})_{i=1}^{d}$, for some permutation $\sigma:\cS\to\cS$, we can restrict the solution to be a set of eigenvectors of $\la$. The function being minimized then becomes $\sum_{i=1}^d\lambda_{\sigma(i)}$, which implies that a primal solution is the set of $d$ smallest eigenvectors. Now, any rotation of this minimizer results in the same loss and is also in $\sp(\vec{e}_i))_{i=1}^{d}$, which implies that any rotation of these eigenvectors is also a primal solution.

Considering the primal solution where $\vec{u}_i^{*}=\ei_i$, Equation (\ref{STAT1}) becomes:
\begin{align*}
    \nabla_{\vec{u}_i}\mathcal{L} (\vecl{e}{i}{},(\beta_{jk}^{*})_{k\le j})
    &=
    2(\lambda_i+\beta_{ii}^{*})\ei_i 
    + \sum_{k=1}^{i-1}\beta_{ik}^{*}\ei_k 
    + \sum_{j=i+1}^{d}\beta_{ji}^{*}\ei_j
    =0\,,\hspace{5pt}\forall\, 1\le i\le d\,.
\end{align*}
Since the eigenvectors are normal to each other, the coefficients all must be $0$, which implies that the corresponding dual solution is $\beta_{ii}^{*}=-\lambda_i$ and $\beta_{jk}^{*}=0$ for $j\neq k$.
\end{proof}

\subsection{Proposition \ref{proof-eigenspaces}}
\begin{proposition}\label{proof-eigenspaces}
Let $\mat{T}\in\RR^{S\times S}$ be a symmetric matrix, $\vec{u}_1,\cdots,\vec{u}_S\in\RR^{S}$ and $\lambda_1,\cdots,\lambda_S\in\RR$ be its eigenvectors and corresponding eigenvalues, and $\vec{e}_1,\cdots,\vec{e}_d\in\RR^{S}$ be a $d-$dimensional orthonormal basis of the subspace $\set{E}:=\sp(\vecl{e}{i}{})$. Then, if $\set{E}$ is closed under the operation of $\mat{T}$, i.e., $\mat{T}(\set{E})\subseteq\set{E}$, there must exist a $d-$dimensional subset of eigenvectors $\{\vec{v}_1,\cdots,\vec{v}_d\}\subseteq\{\vec{u}_1,\cdots,\vec{u}_S\}$ such that $\set{E}$ coincides with their span, i.e., $\set{E}=\sp(\vecl{v}{i}{})=\sp(\vecl{e}{i}{})$. 
\end{proposition}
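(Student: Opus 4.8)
The plan is to use the spectral decomposition of the symmetric matrix $\mat{T}$ and argue that an invariant subspace must be a sum of eigenspaces. First I would diagonalize: since $\mat{T}$ is symmetric, write $\RR^S = \bigoplus_{\mu} \set{V}_\mu$ as an orthogonal direct sum of eigenspaces $\set{V}_\mu = \ker(\mat{T}-\mu\id)$ over the distinct eigenvalues $\mu$. Let $\set{P}_\mu$ denote the orthogonal projector onto $\set{V}_\mu$; these satisfy $\sum_\mu \set{P}_\mu = \id$ and $\mat{T} = \sum_\mu \mu\,\set{P}_\mu$, and crucially each $\set{P}_\mu$ is a polynomial in $\mat{T}$ (Lagrange interpolation on the distinct eigenvalues).

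The key step: I claim $\set{E}$ being $\mat{T}$-invariant forces $\set{P}_\mu(\set{E}) \subseteq \set{E}$ for every $\mu$. This follows because $\set{P}_\mu$ is a polynomial in $\mat{T}$, and $\set{E}$ closed under $\mat{T}$ implies $\set{E}$ closed under every polynomial in $\mat{T}$. Consequently, for any $\vec{x}\in\set{E}$, each component $\set{P}_\mu \vec{x}$ lies in $\set{E}$, so $\set{E} = \bigoplus_\mu (\set{E}\cap\set{V}_\mu)$ — that is, $\set{E}$ decomposes as an orthogonal sum of its intersections with the eigenspaces. (The inclusion $\supseteq$ is trivial; for $\subseteq$, write $\vec{x} = \sum_\mu \set{P}_\mu\vec{x}$ with each summand in $\set{E}\cap\set{V}_\mu$.)

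To finish: within each eigenspace $\set{V}_\mu$, pick an orthonormal basis of the subspace $\set{E}\cap\set{V}_\mu$; every vector in such a basis is automatically an eigenvector of $\mat{T}$ with eigenvalue $\mu$. Taking the union over all $\mu$ yields an orthonormal set $\{\vec{v}_1,\dots,\vec{v}_d\}$ of eigenvectors of $\mat{T}$ whose span is exactly $\set{E}$, with $d = \sum_\mu \dim(\set{E}\cap\set{V}_\mu)$ matching $\dim\set{E}$. Since the $\vec{v}_i$ are eigenvectors, they can be extended to (or selected from) a full eigenbasis $\{\vec{u}_1,\dots,\vec{u}_S\}$ of $\mat{T}$, so $\{\vec{v}_1,\dots,\vec{v}_d\}\subseteq\{\vec{u}_1,\dots,\vec{u}_S\}$ as required, and $\set{E} = \sp(\vecl{v}{i}{}) = \sp(\vecl{e}{i}{})$.

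The main obstacle is a subtlety rather than a deep difficulty: when eigenvalues are repeated, "the" eigenbasis $\{\vec{u}_i\}$ is not unique, so the statement "$\{\vec{v}_i\}\subseteq\{\vec{u}_i\}$" must be read as "there exists a choice of eigenbasis containing the $\vec{v}_i$" — which is fine since any orthonormal set of eigenvectors extends to an orthonormal eigenbasis. The one genuine point to get right is the reduction showing an invariant subspace of a symmetric (more generally, diagonalizable) operator is spanned by eigenvectors; the polynomial-projector argument above handles this cleanly, and avoids any need to invoke minimality of $d$ or induction on dimension.
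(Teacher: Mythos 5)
Your proof is correct, and it takes a genuinely different route from the paper's. The paper works in coordinates relative to the fixed eigenbasis $\{\vec{u}_j\}$: it expands $\vec{w}\in\set{E}$ and $\mat{T}\vec{w}$ in that basis, supposes for contradiction that no $d$-element subset of $\{\vec{u}_j\}$ spans $\set{E}$, deduces that the $\vec{e}_i$ collectively have support on $d'>d$ eigenvector indices, and then argues (somewhat loosely) that the invariance condition forces that $d'$-dimensional span to equal the $d$-dimensional $\set{E}$, a contradiction. Your argument instead goes through the spectral projectors $\set{P}_\mu$: they are polynomials in $\mat{T}$, so $\mat{T}$-invariance of $\set{E}$ immediately gives $\set{P}_\mu$-invariance, hence $\set{E}=\bigoplus_\mu(\set{E}\cap\set{V}_\mu)$, and an orthonormal basis of each intersection is a set of eigenvectors whose union spans $\set{E}$.

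What each approach buys: the paper's argument is elementary (no projectors or polynomial-in-$\mat{T}$ machinery) but, as written, is really only sound when the eigenvalues of $\mat{T}$ are distinct --- with a repeated eigenvalue, e.g.\ $\mat{T}=\id$ in $\RR^2$ with $\vec{u}_1,\vec{u}_2$ the standard basis and $\set{E}=\sp((1,1)/\sqrt2)$, no subset of the \emph{given} $\{\vec{u}_j\}$ spans $\set{E}$ even though $\set{E}$ is invariant, so the contradiction the paper seeks does not actually materialize. You explicitly flag this: the conclusion must be read as ``there exists \emph{some} eigenbasis of $\mat{T}$ of which $\{\vec{v}_i\}$ is a subset,'' and your construction delivers exactly that (extend the orthonormal eigenvectors found in each $\set{E}\cap\set{V}_\mu$ to a full orthonormal basis of $\set{V}_\mu$). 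That is the correct reading for how the proposition is used in the proof of Lemma~\ref{lem1}, where eigenvalue multiplicities are explicitly allowed. Your route is also the one that survives the generalization to the abstract setting in Appendix~\ref{ap:abstract} without modification, since the spectral-theorem/projector argument carries over to compact self-adjoint operators verbatim. In short: your proof is both more robust and closer to the standard invariant-subspace argument for normal operators, and it correctly patches a gap the paper's proof leaves open in the degenerate-spectrum case.
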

\begin{proof}
    Let $\vec{w}\neq0$ be a vector in $\set{E}$. Then, by definition of $\set{E}$, it can be expressed as a linear combination $\vec{w}=\sum_{i=1}^{d}\alpha_i\vec{e}_i$, where $\alpha_i\in\RR$, and at least one of the coefficients is non-zero. Let us consider now the operation of $\mat{T}$ on $\vec{w}$ in terms of its eigenvectors. Specifically, we can express it as
    \begin{align*}
        \mat{T}\vec{w}
        &=
        \sum_{j=1}^{S} \lambda_j\ip{\vec{u}_j,\vec{w}}\vec{u}_j\,,
    \end{align*}
    which, by linearity of the inner-product, becomes
    \begin{align*}
        \mat{T}\vec{w}
        &=
        \sum_{j=1}^{S} \left( \lambda_j \sum_{i=1}^{d} \alpha_i \ip{\vec{u}_j,\vec{e}_i} \right) \vec{u}_j\,.
    \end{align*}
    Considering the hypothesis that $\set{E}$ is closed under $\mat{T}$, we reach a necessary condition:
    \begin{align}\label{requirement-eigenspaces}
        \sum_{j=1}^{S} \left( \lambda_j \sum_{i=1}^{d} \alpha_i \ip{\vec{u}_j,\vec{e}_i} \right) \vec{u}_j
        \stackrel{!}{=}
        \sum_{i=1}^{d}\beta_i\vec{e}_i\,,
    \end{align}
    where $\beta_i\in\RR$, and at least one of them is non-zero.
    
    We proceed by contradiction. Let us suppose that there does not exist a $d-$dimensional subset of eigenvectors $\{\vec{v}_1,\cdots,\vec{v}_d\}$ such that $\set{E}=\sp(\vecl{v}{i}{})$. Since the eigenvectors form a basis of the whole space, we can express each $\vec{e}_i$ as linear combinations of the form
    \begin{align*}
        \vec{e}_i = \sum_{j=1}^{S}c_{ij}\vec{u}_i = \sum_{j=1}^{S}\ip{\vec{u}_j,\vec{e}_i}\vec{u}_j\,.
    \end{align*}
    So, supposing that $\set{E}$ does not correspond to any eigenvector subspace, there must exist $d'>d$ \textit{different} indices $j_1,\cdots,j_{d'}$ and corresponding pairs $(i_k,j_k)$ such that $c_{i_k j_k}\neq0$. If this was not the case, this would imply that all the $\vec{e}_i$ lie in the span of some subset of $d$ or less eigenvectors, and so $\set{E}$ would correspond to this span. 
    
    Hence, we have that the coefficients $\alpha_i$ are arbitrary and that at least $d+1$ inner products are not zero. This implies that $\vec{w}$ lies in a subspace of dimension at least $d+1$ spanned by the $d'$ eigenvectors $\vec{v}_1,\cdots,\vec{v}_{d'}$ with $\vec{v}_k=\vec{u}_{j_k}$. Now, the condition in Equation (\ref{requirement-eigenspaces}) requires this subspace to be the same as $\set{E}$, but this is not possible since $\set{E}$ is $d-$dimensional. Thus, we can conclude that there must exist a basis of $d$ eigenvectors $\vec{v}_1,\cdots,\vec{v}_{d}$ of $\mat{T}$ such that $\set{E}=\sp(\vecl{v}{i}{})=\sp(\vecl{e}{i}{})$.
\end{proof}

{\color{black}
\subsection{Proof of Corollary \ref{cor1}}
\begin{proof}
    Taking the gradient of GGDO (\ref{GGDO}), referred to as $\lag$ here, leads us to a similar expression as in Equation (\ref{eq_u}) for a pair $(\vec{u},\dual=\boldsymbol{0})$ and index $i$:
    \begin{align*}
        \nabla_{\vec{u}_i}\mathcal{L} (\vec{u})
        &= 
        2c_i\la\vec{u}_i 
        + \underbrace{\sum_{j=1}^{d}\beta_{ij}\vec{u}_j}_{=\boldsymbol{0}}
        + 2b\sum_{j=1}^{d}c_{ij}(\ip{\vec{u}_i,\vec{u}_j}-\delta_{ij})\vec{u}_j \,,
    \end{align*}
    where $c_{ij}$ is some constant function of $c_i$ and $c_j$. Comparing with the proof of Lemma \ref{lem1} in \ref{proof:lem1}, we can notice that there are two key differences: the dual sum $\sum_{j=1}^{d}\beta_{ij}\vec{u}_j$ is now $\boldsymbol{0}$, and the indices in the barrier term $\sum_{j=1}^{d}c_{ij}(\ip{\vec{u}_i,\vec{u}_j}-\delta_{ij})\vec{u}_j$ go now from $1$ to $d$, as opposed to only $1$ to $i$. If we require the gradient to be $\boldsymbol{0}$ as well, meaning that $\vec{u}$ is an equilibrium point, then we have that:
    \begin{align*}
        \la\vec{u}_i
        &= 
        \sum_{j=1}^{d}C_{ij}(\ip{\vec{u}_i,\vec{u}_j}-\delta_{ij})\vec{u}_j \,,
    \end{align*}
    for some non-zero constants $C_{ij}$. That is, the vector $\la\vec{u}_i$ has to lie in the subspace spanned by the $d$ vectors $\vec{u}_1,\cdots,\vec{u}_d$. This is only possible, however, if at least 1 of the coefficients $\ip{\vec{u}_i,\vec{u}_j}-\delta_{ij}$ is different from 0 for some $j$. But then, this means that the restrictions of the original optimization problem are violated in any possible equilibrium point.
\end{proof}
}

\subsection{Proof oF Theorem \ref{stable}}
\begin{proof}
    Let us define the following vectors defining the descent directions for $\vec{u}$ and  $\dual$:
    \begin{equation*}
        \vec{g}_{\vec{u}} 
        =
        \begin{bmatrix}
            \vec{g}_{\vec{u}_1} \\[0.1cm]
            \vec{g}_{\vec{u}_2} \\[0.1cm]
            \vdots \\[0.1cm]
            \vec{g}_{\vec{u}_d}
        \end{bmatrix}
        \,,\hspace{10pt}
        \vec{g}_{\dual} 
        =
        \begin{bmatrix}
            \frac{\partial\lag}{\partial\beta_{1,1}} \\[0.2cm]
            \frac{\partial\lag}{\partial\beta_{2,1}} \\[0.2cm]
            \frac{\partial\lag}{\partial\beta_{2,2}} \\[0.2cm]
            \vdots \\[0.2cm]
            \frac{\partial\lag}{\partial\beta_{d,1}} \\[0.2cm]
            \vdots \\[0.2cm]
            \frac{\partial\lag}{\partial\beta_{d,d}}
        \end{bmatrix}\,.
    \end{equation*}

    Then, the global ascent-descent direction can be represented by the vector 
    \begin{equation*}
        \vec{g} 
        =
        \begin{bmatrix}
            \vec{g}_{\vec{u}} \\[0.1cm]
            -\vec{g}_{\dual}
        \end{bmatrix}
        \,.
    \end{equation*}

    To determine the stability of any equilibrium point of the ascent-descent dynamics introduced in Lemma \ref{dyn_eq}, we only need to calculate the Jacobian of $\vec{g}$, the matrix $\mat{J}:=J(\vec{g})$ whose rows correspond to the gradients of each entry of $\vec{g}$, and determine its eigenvalues \citep{chicone2006ordinary}.

    We proceed to take the gradients of Equation \ref{eq_u} and \ref{eq_beta}:
    \begin{align}
        \mat{J}_{ij}(\vec{u},\dual)
        &:=
        (\nabla_{\vec{u}_i}\vec{g}_{\vec{u}_j}(\vec{u},\dual)^\top)^\top \label{eq:h1} \\[0.2cm]
        &=
        \begin{cases}
            2\la + \beta_{ii}\mat{I} + 2b\big[ (\ip{\vec{u}_i,\vec{u}_i}-1)\mat{I} + 2\vec{u}_i\otimes\vec{u}_i \big] + 2\sum_{k=1}\limits^{i-1}b\vec{u}_k\otimes\vec{u}_k \,,
            & \text{if } i=j  \,; 
            \\[0.5cm]
            \beta_{ij}\mat{I} + 2b \big( \ip{\vec{u}_i, \vec{u}_j}\mat{I} + \vec{u}_i\otimes\vec{u}_j \big) \,,
            & \text{if } i>j  \,; 
            \\[0.5cm]\
            \mat{0} \,,
            & \text{if } i<j  \,; 
        \end{cases} \nonumber
    \end{align}
    \begin{align}
        \mat{J}_{k\beta_{ij}}(\vec{u},\dual)
        &:=
        \bigg( -\nabla_{\vec{u}_k}\frac{\partial\lag}{\partial\beta_{ij}}(\vec{u},\dual) \bigg)^\top
        =
        -\vec{u}_j^\top\delta_{ik} - \vec{u}_i^\top\delta_{jk}
        \,; \\[0.5cm]
        \mat{J}_{\beta_{jk}i}(\vec{u},\dual)
        &:=
        \frac{\partial}{\partial\beta_{jk}}\vec{g}_{\vec{u}_i}(\vec{u},\dual)
        =
        2\vec{u}_i\delta_{ij}\delta_{ik} + \vec{u}_k\delta_{ij}(1-\delta_{jk})
        \,; \\[0.5cm]
        \mat{J}_{\beta_{k\ell}\beta_{ij}}(\vec{u},\dual)
        &:=
        \frac{\partial^2\lag}{\partial\beta_{k\ell}\partial\beta_{ij}}(\vec{u},\dual)
        =
        0
        \,.\label{eq:h4}
    \end{align}
    
    Then, we have that in any equilibrium point $\vec{u}^*,\dual^*$, i.e., in a permutation $\sigma$ of the Laplacian eigensystem (as per Lemma \ref{dyn_eq}), the Jacobian satisfies:
    \begin{align*}
        \mat{J}_{ij}(\vec{u}^*,\dual^*)
        &=
        \begin{cases}
            2\la - 2\lambda_{i}\mat{I} + 4b\vec{e}_{\sigma(i)}\otimes\vec{e}_{\sigma(i)} + 2\sum_{k=1}\limits^{i-1}b\vec{e}_{\sigma(k)}\otimes\vec{e}_{\sigma(k)} \,,
            & \text{if } i=j   \,;
            \\[0.5cm]
            2b \vec{e}_{\sigma(i)}\otimes\vec{e}_{\sigma(j)} \,,
            & \text{if } i>j   \,;
            \\[0.5cm]
            \mat{0} \,,
            & \text{if } i<j  \,; 
        \end{cases}
    \end{align*}
    \vspace{-10pt}
    \begin{align*}
        \mat{J}_{k\beta_{ij}}(\vec{u}^*,\dual^*)
        &=
        -\vec{e}_{\sigma(j)}^\top\delta_{ik} - \vec{e}_{\sigma(i)}^\top\delta_{jk}
        \,; \\[0.5cm]
        \mat{J}_{\beta_{jk}i}(\vec{u}^*,\dual^*)
        &=
        2\vec{e}_{\sigma(i)}\delta_{ij}\delta_{ik} + \vec{e}_{\sigma(k)}\delta_{ij}(1-\delta_{jk})
        \,; \\[0.5cm]
        \mat{J}_{\beta_{k\ell}\beta_{ij}}(\vec{u}^*,\dual^*)
        &=
        0
        \,.
    \end{align*}

    Now, we determine the eigenvalues of this Jacobian. For this, we need to solve the system:
    \begin{equation}\label{eig}
        \mat{J}\vec{v} = \eta\vec{v}\,,
    \end{equation}
    where $\eta$ denotes an eigenvalue of the Jacobian and $\vec{v}$, its corresponding eigenvector. 
    
    To facilitate the solution of this system, we use the following notation:
    \begin{equation*}
        \vec{v} 
        =
        \begin{bmatrix}
            \vec{w} \\[0.1cm]
            \jev
        \end{bmatrix}
        \,,\hspace{10pt}
        \vec{v}_{\vec{u}} 
        = \begin{bmatrix}
            \vec{w}_{1} \\[0.1cm]
            \vec{w}_{2} \\[0.1cm]
            \vdots \\[0.1cm]
            \vec{w}_{d}
        \end{bmatrix}
        \,,\hspace{10pt}
        \jev 
        =
        \begin{bmatrix}
            \nu_{1,1} \\[0.2cm]
            \nu_{2,1} \\[0.2cm]
            \nu_{2,2} \\[0.2cm]
            \vdots \\[0.2cm]
            \nu_{d,1} \\[0.2cm]
            \vdots \\[0.2cm]
            \nu_{d,d}
        \end{bmatrix}\,,\hspace{10pt}
    \end{equation*}
    where $\vec{w}_{i}\in\RR^{|\cS|}$, for all $1\le i\le d$, and $\nu_{jk}\in\RR$, for all $1\le k\le j\le d$. With this, the eigenvalue system (\ref{eig}) becomes:
    \begin{equation}\label{jacob_sys}
        \begin{cases}
            \sum_{j=1}^{d}\mat{J}_{ji}\vec{w}_j + \sum_{j=1}^{d}\sum_{k=1}^{j}\mat{J}_{\beta_{jk}i}\nu_{jk}
            =
            \eta\vec{w}_i 
            & \forall\hspace{2pt} 1\le i\le d\\[0.5cm]
            \sum_{k=1}^{d} \mat{J}_{k\beta_{ij}}\vec{w}_k 
            = 
            \eta\nu_{ij}
            & \forall\hspace{2pt} 1\le j \le i\le d
        \end{cases}\,.       
    \end{equation}

    Since the Laplacian eigenvectors form a basis, we have the decomposition $\vec{w}_{i}=\sum_{j=1}^{|\cS|}c_{ij}\vec{e}_{\sigma(j)}$, for some sequence of reals $(c_{ij})_{j=1}^{|\cS|}$. Hence, replacing the values of the Jacobian components in the upper equation of the system (\ref{jacob_sys}), we obtain:
    \begin{gather*}
        \sum_{j=1}^{i-1}2b( \vec{e}_{\sigma(i)}\otimes\vec{e}_{\sigma(j)})\vec{w}_j 
        + \bigg( 2\la - 2\lambda_{i}\mat{I} + 4b\vec{e}_{\sigma(i)}\otimes\vec{e}_{\sigma(i)} + 2\sum_{k=1}^{i-1}b\vec{e}_{\sigma(k)}\otimes\vec{e}_{\sigma(k)} \bigg)\vec{w}_i
        + \cdots \\[0.2cm]
        \cdots +  
        \sum_{j=1}^{d}\sum_{k=1}^{j} \bigg( 2\vec{e}_{\sigma(i)}\delta_{ij}\delta_{ik} 
        + \vec{e}_{\sigma(k)}\delta_{ij}(1-\delta_{jk}) \bigg)\nu_{jk} 
        - \eta\vec{w}_i = 0 \\[0.5cm]
        \implies\hspace{10pt}
        2\sum_{j=1}^{i-1}bc_{ji}\ei_{\sigma(j)} 
        + 2\sum_{j=1}^{|\cS|}(\lambda_j-\lambda_i)c_{ij}\ei_{\sigma(j)}
        + 4b\ei_{\sigma(i)}
        + \sum_{j=1}^{i-1}bc_{ij}\ei_{\sigma(j)}+ \cdots \\[0.2cm]
        \cdots +  
        2\nu_{ii}\ei_{\sigma(i)} 
        + \sum_{k=1}^{i-1}\nu_{ik}\ei_{\sigma(k)} 
        - \eta  \sum_{j=1}^{|\cS|}c_{ij}\ei_{\sigma(j)} = 0 \,.
    \end{gather*}

    Since the eigenvectors form a basis, we have that each coefficient in the sum of terms we have must be 0. Hence, we obtain the following conditions:
    \begin{gather}\label{eval_conditions}
        \begin{cases}
            c_{ij}[2(\lambda_{\sigma(j)}-\lambda_i) + 2b - \eta ] 
            = 
            2bc_{ji} - \nu_{ij} \,,
            & \forall\hspace{2pt} 1\le j< i \le d\\[0.5cm]
            c_{ii}(4b-\eta) = -2\nu_{ii} \,,
            & \forall\hspace{2pt} 1\le i \le d\\[0.5cm]
             c_{ij}[2(\lambda_{\sigma(j)}-\lambda_i) - \eta ] 
            = 
            0 \,,
            & \forall\hspace{2pt} 1\le i< j \le |\cS|
        \end{cases}\,.
    \end{gather}

    Each of these conditions specify the possible eigenvalues of the Jacobian matrix $\mat{J}$. First and foremost, the third condition tells us that $\eta=2(\lambda_{\sigma(j)}-\lambda_i)$ is an eigenvalue independent of $b$, for any possible pair $i\le j$. Since we are supposing the eigenvalues are increasing with their index, for the eigenvalues to be positive, the permutation $\sigma:\cS\to\cS$ must preserve the order for all indexes, which only can be true for the identity permutation. That is, all the Laplacian eigenvector permutations that are not sorted are unstable.\footnote{Note that this conclusion only apply to the case where there are no repeated eigenvalues. This is the case since otherwise $\eta=0$ is an eigenvalues and then the Hartman-Grobman theorem that allows to conclude stability is not conclusive.}

    In addition, deriving the rest of the eigenvalues from the remaining two conditions in (\ref{eval_conditions}) and the second set of equations of the system (\ref{jacob_sys}), we obtain a lower bound for $b$ that guarantees the stability of the Laplacian representation. In particular, from the second set of equations of the system (\ref{jacob_sys}) we can obtain a relationship between the coefficients $c_{ij}$ and $c_{ji}$ with $\nu_{ij}$, for all $1\le j\le i \le d$:
    \begin{gather}\label{c_nu_relation}
        \sum_{k=1}^{d} \mat{J}_{j\beta_{ij}}\vec{w}_k 
        = 
        \sum_{k=1}^{d}\bigg( -\vec{e}_{\sigma(j)}^\top\delta_{ik} - \vec{e}_{\sigma(i)}^\top\delta_{jk} \bigg)\bigg( \sum_{\ell=1}^{|\cS|}c_{k\ell}\vec{e}_{\sigma(\ell)} \bigg)
        =
        -c_{ij} - c_{ji}
        =
        \eta\nu_{ij} \,.
    \end{gather}

    Replacing this into the second condition in (\ref{eval_conditions}) we get that $\eta=2b\pm2\sqrt{b^2-1}$. These set of eigenvalues (two for each $i$) always have a positive real part, as long as $b$ is strictly positive. In addition, if $b\ge 1$, we get purely real eigenvalues, which are associated with a less oscillatory behavior (see \citep{sastry2013nonlinear}).

    Finally, if we assume that $\eta\neq 2(\lambda_{\sigma(i)}-\lambda_j)$ for $j<i$ (i.e., $\eta$ is not an eigenvalue already considered), we must have that $c_{ji}=0$, and so, by (\ref{c_nu_relation}), $-c_{ij}=\eta\nu_{ij}$. Replacing this into the first condition in (\ref{eval_conditions}), we get that $\eta=(\lambda_{\sigma(j)}-\lambda_{i}) + b \pm \sqrt{[(\lambda_{\sigma(j)}-\lambda_{i}) + b]^2-1}\ge -2 + b - \sqrt{[(\lambda_{\sigma(j)}-\lambda_{i}) + b]^2-1}$. Thus, if $b$ is larger than the maximal eigenvalue difference for the first $d$ eigenvalues of $\la$, we have guaranteed that these eigenvalues of the Jacobian will be positive. Furthermore, since the eigenvalues are restricted to the range $[0,2]$, we have that $b>2$ ensures a strict stability of the Laplacian representation.
\end{proof}

{\color{black}
\subsection{Abstract setting}\label{ap:abstract}
For sake of completeness, we now discuss how Lemma \ref{dyn_eq} and Theorem \ref{stable} are affected when we consider an abstract measure state space. 

\paragraph{Abstract measure state spaces.} First, our state space is now a triple $(\cS,\Sigma,\rho)$, where $\cS$ is a set of states, $\Sigma$ is an appropriate $\sigma-$algebra contained in the power set $\mathcal{P}(\cS)$ and whose elements are those sets of states that can be measured, and $\rho:\Sigma\to[0,1]$ is a valid probability measure for the pair $(\cS,\Sigma)$. Also, for clarity, let us denote now the state distribution $P(s,a)$ corresponding to the state-action pair $(s,a)$ as $\pr(\cdot|s,a)$. In this manner, while the policy $\pi$ remains the same, the Markov process induced by it cannot be represented anymore by a matrix $\pp$. In its place, we have now the transition probability map $\pr_{\pi}:\cS\to\Delta(\cS)$,\footnote{\color{black} The simplex $\Delta(\cS)$ refers here to the set of probability measures defined over $(\cS,\Sigma)$.} defined by $\pr_{\pi}(\cdot|s)=\sum_{a\in\cA}\pi(a|s)\pr(\cdot|s,a)$. Moreover, associated with each transition probability measure $\pr_{\pi}(\cdot|s)$ for $s\in\cS$, there is a transition probability density, $p_{\pi}(\cdot|s):\cS\to[0,\infty)$, defined as the Radon-Nikodym derivative $p_{\pi}(s'|s) = \frac{d\pr_{\pi}(\cdot|s)}{d\rho} \Big|_{s'}$. This is the same to say that the probability of reaching a state in the set $\set{B}\subset\cS$, starting from state $s$, is the integral of the density in $\set{B}$: $\pr_{\pi}(\set{B}|s) = \int_{\set{B}} p_{\pi}(s'|s) \rho(ds')$.

\paragraph{Abstract Laplacian.} Second, let us consider the \textit{vector space} of square $\rho$-integrable functions $\mathcal{L}^2(\rho)=\{u\in\RR^{\cS}:\int_{\cS} u(s)^2 \rho(ds)<\infty\}$. The measure $\rho$ naturally induces an inner-product in this space, $\ip{\cdot,\cdot}_{\rho}:\mathcal{L}^2(\rho)\times\mathcal{L}^2(\rho)\to[0,\infty)$, defined as $\ip{v_1,v_2}_{\rho}=\int_{\cS}v_1(s)v_2(s)\rho(ds)$, which is the expected correlation of the input vectors $v_1,v_2\in\mathcal{L}^2(\rho)$ under the measure $\rho$. With this, we can define the transition operator $P_{\pi}:\mathcal{L}^2(\rho)\to\mathcal{L}^2(\rho)$ as: 
\begin{align*}
    [P_{\pi}v](s) 
    = 
    \int_{\cS} v(s')p_{\pi}(s'|s) \rho(ds')\,.
\end{align*} 
Note that, just as the original transition matrix $\pp$ mapped vectors in $\RR^{|\cS|}$ to itself, the transition operator does the same in $\mathcal{L}^2(\rho)$. Hence, it is its abstract analogous and, correspondingly, we can define the abstract graph Laplacian as the square $\rho$-integrable linear operator $L=I-f(P_{\pi})$ in $\text{End}(\mathcal{L}^2(\rho))$, where $I$ is the identity linear operator and $f:\text{End}(\mathcal{L}^2(\rho))\to\text{End}(\mathcal{L}^2(\rho))$ is a function that maps $P_{\pi}$ to a \textit{self-adjoint} square $\rho$-integrable linear operator $f(P_{\pi})$.\footnote{\color{black} $\text{End}(\mathcal{L}^2(\rho))$ denotes the space of endomorphisms of $\mathcal{L}^2(\rho)$, i.e., the space of linear operators between $\mathcal{L}^2(\rho)$ and itself.} That a linear operator is self-adjoint means, essentially, that its corresponding density with respect to $\rho$ is symmetric. Thus, this restriction is equivalent to ask that $\la$ is a symmetric matrix in the finite-dimensional case. Hence, when $p_{\pi}(s'|s)\neq p_{\pi}(s|s')$, we can define $f$ in such a way that the density of $f(P_{\pi})$ is $\frac{1}{2}(p_{\pi}(s|s') + p_{\pi}(s'|s))$~\citep[see Equation 4 by][]{wu2018laplacian}.

\paragraph{Extension of Lemma \ref{dyn_eq}.} After the introduced redefinitions, ALLO takes the exact same functional form as in Equation (\ref{AL}):
\begin{align}\label{ALabs}
    \max_{\dual}\min_{u_1,\cdots,u_d\in\mathcal{L}^2(\rho)} \quad
    &
    \sum_{i=1}^{d} \ip{u_i, Lu_i}_{\rho} 
    + \sum_{j=1}^{d}\sum_{k=1}^{j}\beta_{jk}\big(\ip{u_{j}, \sg{u_{k}}}_{\rho} - \delta_{jk}\big)
    + \cdots \\[0.2cm]
    & \hspace{20pt}\cdots 
    + b \sum_{j=1}^{d}\sum_{k=1}^{j}\big(\ip{u_{j}, \sg{u_{k}}}_{\rho} - \delta_{jk}\big)^2\,.\nonumber
\end{align}

In the abstract setting, and in general, we can define the differential $DF$ of a linear operator $F$ as:
\begin{align*}
    DF(u)[v] 
    &= 
    \frac{d}{dt}L(u+tv)\big|_{t=0}
    =
    \lim_{t\to\infty}
    \frac{L(u+tv)-Lu}{t}\,.
\end{align*}

Now, let us fix the dual variables $\dual$, an index $1\le i\le d$, and the functions $u_j$ with $j\neq i$, We denote our ALLO objective with these fixed values by $F_i=F_{i}(\dual,(u_j)_{j\neq i}):\mathcal{L}^2(\rho)\to\mathcal{L}^2(\rho)$. Then, the change of this operator at the function $u_i$ in a direction $v_i$, with step size $t$, where the stop gradient operators affect $t$ now,\footnote{\color{black} This means that $\sg{u_{i}+tv_i}=\sg{u_{i}}=u_{i}$.} is:
\begin{align*}
    F_{i}(u_i+tv_i) 
    &=
    \Bigg[ 
    \sum_{j\neq i} \ip{u_j, Lu_j}_{\rho} 
    + \sum_{j=1}^{{i-1}}\sum_{k=1}^{j}\beta_{jk}\big(\ip{u_{j}, \sg{u_{k}}}_{\rho} - \delta_{jk}\big)
    + \cdots \\[0.2cm]
    & \hspace{5pt}\cdots 
    + \sum_{j=i+1}^{d}\sum_{\substack{k=1 \\ k\neq i}}^{j}\beta_{jk}\big(\ip{u_{j}, \sg{u_{k}}}_{\rho} - \delta_{jk}\big)
    + b \sum_{j=1}^{i-1}\sum_{k=1}^{j}\big(\ip{u_{j}, \sg{u_{k}}}_{\rho} - \delta_{jk}\big)^2
    + \cdots \\[0.2cm]
    & \hspace{5pt}\cdots
    + b \sum_{j=i+1}^{d}\sum_{\substack{k=1 \\ k\neq i}}^{j}\big(\ip{u_{j}, \sg{u_{k}}}_{\rho} - \delta_{jk}\big)^2
    \Bigg] 
    + \ip{u_i+tv_i, L(u_i+tv_i)}_{\rho}
    + \cdots \\[0.2cm]
    & \hspace{5pt}\cdots 
    + \beta_{ii}\big(\ip{(u_{i}+tv_i), \sg{u_{i}+tv_i}}_{\rho} - \delta_{ii}\big)
    + b\big(\ip{(u_{i}+tv_i), \sg{u_{i}+tv_i}}_{\rho} - \delta_{ii}\big)^2
    + \cdots \\[0.2cm]
    & \hspace{5pt}\cdots
    + \sum_{k=1}^{i-1}\beta_{ik}\big(\ip{(u_{i}+tv_i), \sg{u_{k}}}_{\rho} - \delta_{ik}\big)
    + \sum_{k=1}^{i-1}b\big(\ip{(u_{i}+tv_i), \sg{u_{k}}}_{\rho} - \delta_{ik}\big)^2
    + \cdots \\[0.2cm]
    & \hspace{5pt}\cdots
    + \sum_{j=i+1}^{d}\beta_{ji}\big(\ip{u_{j}, \sg{u_{i}+tv_i}}_{\rho} - \delta_{ji}\big)
    + \sum_{j=i+1}^{d}b\big(\ip{u_{j}, \sg{u_{i}+tv_i}}_{\rho} - \delta_{ji}\big)^2
    \\[0.2cm]
    &=
    F_{i}(u_i) + 
    t\Bigg[
    2\ip{u_i, Lv_i}_{\rho}
    + \beta_{ii}\ip{u_{i},v_{i}}_{\rho}
    + 2b\ip{u_{i},v_{i}}_{\rho}\big(\ip{(u_{i}, \sg{u_{i}}}_{\rho} - \delta_{ii}\big)+ \cdots \\[0.2cm]
    & \hspace{5pt}\cdots
    + \sum_{k=1}^{i-1}\beta_{ik}\ip{u_k, v_{i}}_{\rho}
    + 2\sum_{k=1}^{i-1}b\ip{u_k, v_{i}}_{\rho}\big(\ip{u_{i}, \sg{u_{k}}}_{\rho} - \delta_{ik}\big)
    \Bigg] 
    + \mathcal{O}(t^2)\,.
\end{align*}

Hence, we can conclude that its differential, $DF_i$, is given by:
\begin{align*}
    DF_i(u_i)[v_i] 
    &= 
    \lim_{t\to\infty}
    \frac{F_i(u_i+tv_i)-F_i(u-i)}{t}
    \\[0.2cm]
    &= 
    2\ip{u_i, Lv_i}_{\rho}
    + \beta_{ii}\ip{u_{i},v_{i}}_{\rho}
    + 2b\ip{u_{i},v_{i}}_{\rho}\big(\ip{(u_{i}, \sg{u_{i}}}_{\rho} - \delta_{ii}\big)
    + \cdots \\[0.2cm]
    & \hspace{20pt}\cdots
    + \sum_{k=1}^{i-1}\beta_{ik}\ip{u_k, v_{i}}_{\rho}
    + 2\sum_{k=1}^{i-1}b\ip{u_k, v_{i}}_{\rho}\big(\ip{u_{i}, \sg{u_{k}}}_{\rho} - \delta_{ik}\big)
    \\[0.2cm]
    &= 
    \Bigg\langle 
    v_i
    \hspace{5pt},
    \hspace{5pt}
    2Lu_i 
    + \beta_{ii}u_{i} 
    + 2b\big(\ip{u_{i}, \sg{u_{i}}}_{\rho} - \delta_{ii}\big)u_{i} 
    + \sum_{k=1}^{i-1}\beta_{ik}u_k
    + \cdots \\[0.2cm]
    & \hspace{20pt}\cdots
    + 2b\sum_{k=1}^{i-1}\big(\ip{u_{i}, \sg{u_{k}}}_{\rho} - \delta_{ik}\big)u_{k}
    \Bigg\rangle_{\rho}
    \\[0.2cm]
    &= 
    \Bigg\langle 
    v_i
    \hspace{5pt},
    \hspace{5pt}
    2Lu_i 
    + \sum_{k=1}^{i}\beta_{ik}u_k
    + 2b\sum_{k=1}^{i}\big(\ip{u_{i}, u_{k}}_{\rho} - \delta_{ik}\big)u_{k}
    \Bigg\rangle_{\rho}\,.
\end{align*}

Now, we have that the gradient in an arbitrary inner-product vector space is defined as the vector $\nabla F(u)$ such that $DF(u)[v]=\ip{v,\nabla F(u)}_{\rho}$. That is, the gradient of ALLO with respect to $u_i$, given the stop gradient operators, denoted as $g_i(\dual,(u_i)_{i=1}^d)$, is: 
\begin{equation*}
    g_i(\dual,(u_i)_{i=1}^d)
    :=
    \nabla F_i(u_i)
    =
    2Lu_i 
    + \sum_{k=1}^{i}\beta_{ik}u_k
    + 2b\sum_{k=1}^{i}\big(\ip{u_{i}, u_{k}}_{\rho} - \delta_{ik}\big)u_{k} \,.  
\end{equation*}

We can note that this expression is exactly equivalent to the one obtained in the finite dimensional case in Equation (\ref{eq_u}). Similarly, one can derive an analogous to Equation (\ref{eq_beta}), and, since the proof of Lemma \ref{dyn_eq} only depends on the properties of the inner product, and not a specific inner product, we can conclude that Lemma \ref{dyn_eq} applies to the general abstract setting as well. In particular, we have that the set of permutations of $d$ eigenvectors of the Laplacian is equal to the equilibria of the gradient ascent-descent dynamics for ALLO.\footnote{\color{black} Note that our derivation is exactly the same one would follow to calculate the gradient in the finite dimensional case if one only uses the definitions of differential and gradient. Under this light, the fact that we are considering abstract spaces instead of finite dimensional ones seems irrelevant for Lemma \ref{dyn_eq}.}

\paragraph{Extension of Theorem \ref{stable}.} By analogy with the previous derivation, we can obtain the Hessian of $F_i$ with respect to $u_j$, considering the stop gradients (i.e., the gradient of $g_i$), by replacing matrices with their linear operator counterparts in Equations (\ref{eq:h1})-(\ref{eq:h4}). Then, since the Laplacian is a self-adjoint square $\rho$-integrable linear operator, typically called Hilbert-Schmidt integral operator, it is also a compact operator. Thus, any vector can be expressed as a countable sum of eigenvectors, i.e., there are potentially infinite, but at most countable eigenvectors, and so the steps from Equation (\ref{jacob_sys}) to Equation (\ref{c_nu_relation}) still hold, due to the \textit{spectral theory of compact operators}.
}


\newpage

\section{Environments}

\begin{figure}[h]
\begin{center}
\includegraphics[width=\linewidth]{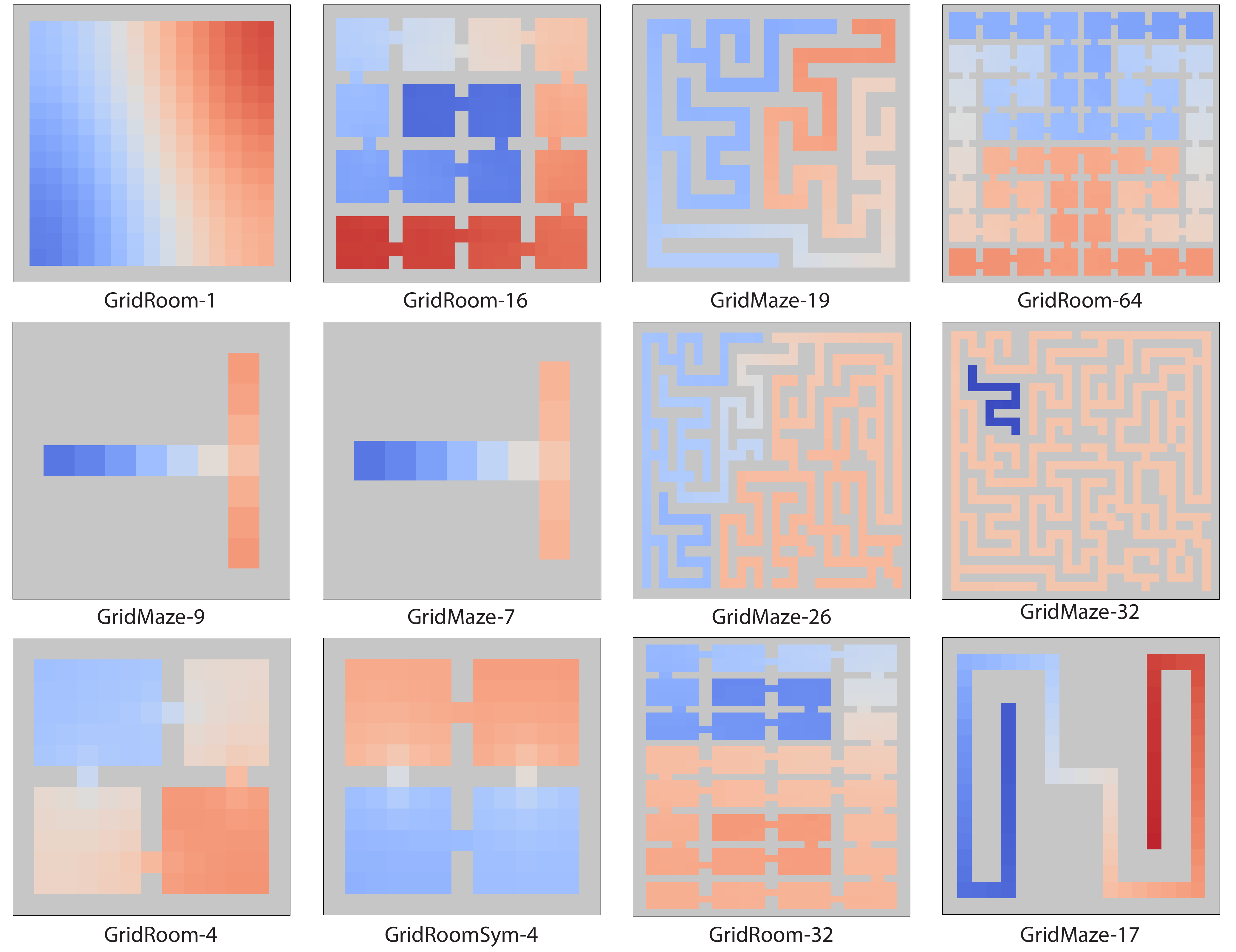}
\end{center}
\vspace{-0.4cm}
\caption{Grid environments where the Laplacian representation is learned with both GGDO and ALLO. Color corresponds to the second smallest eigenvector of the Laplacian learned by ALLO.}
\vspace{-0.1cm}
\label{fig:eigvecs_all_envs}
\end{figure}

{\color{black}
\section{Learned eigenvectors}\label{ap:learned_eigen}

\begin{figure}[h]
\begin{center}
\includegraphics[width=\linewidth]{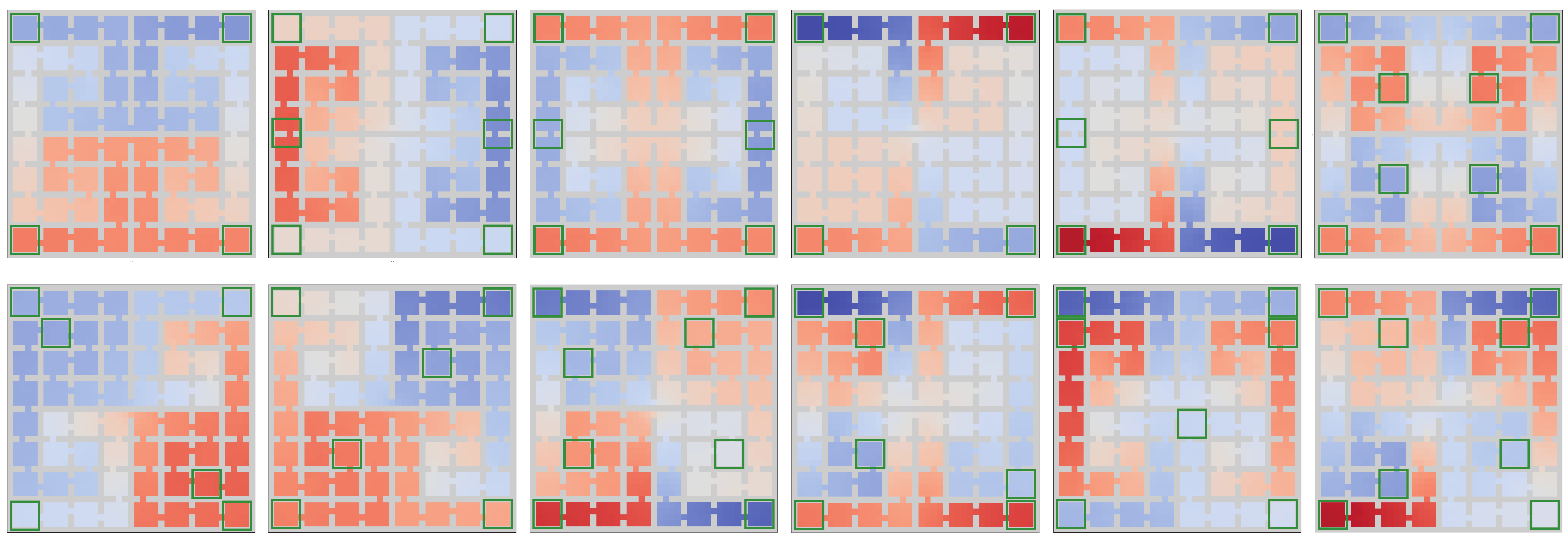}
\end{center}
\vspace{-0.4cm}
\caption{\color{black} Second to seventh Laplacian eigenvectors in increasing order for the environment \texttt{GridRoom-64}. The top row corresponds to ALLO, while the bottom one to GGDO. The green bounding boxes indicate the location of a local minimum or maximum.}
\vspace{-0.1cm}
\label{fig:max_min}
\end{figure}

Figure \ref{fig:max_min} shows the difference between the smallest eigenvectors of the Laplacian (those learned via ALLO) and rotations of them (those learned via GGDO). One can note how, in general, the rotations present two characteristics: they have more local minima and local maxima, and their location is distributed irregularly in the state space. The first characteristic is relevant considering that the eigenvectors are used as reward functions to discover options, i.e., temporally-extended actions, that have as sink states these critical states. The more sink states, the shorter the trajectories associated with an induced option, and, as a result, \textbf{the less exploration}. 

Similarly, the irregularity is a manifestation of eigenvectors of different temporal scales being combined, which affects the temporal distance induced by the Laplacian representation. In particular, as studied by \citet{wang2022reachability}, if each eigenvector is normalized by their corresponding eigenvalue, the representation distance corresponds exactly to the (square root of the) temporal distance. However, the eigenvalues, which are a measure of the temporal scale~\citep[see][]{DBLP:conf/icml/JinnaiPAK19}, are only defined for the original eigenvectors. Hence, the distance induced by the permutations cannot be normalized and, as a result, \textbf{it is not as suitable for reward shaping}.  

}

{\color{black}
\section{Learning with random permutations} \label{ap:permutation}

\begin{figure}[h]
\begin{center}
\includegraphics[width=\linewidth]{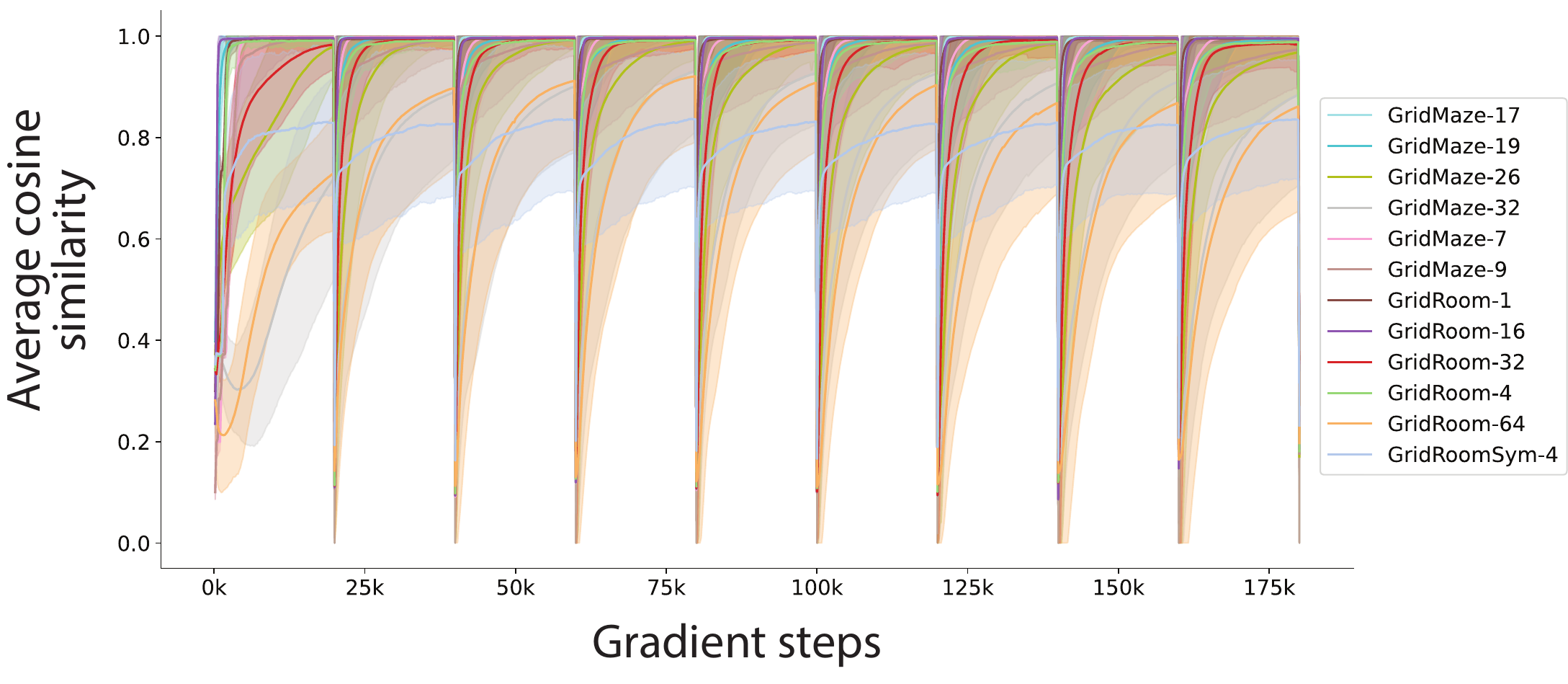}
\end{center}
\vspace{-0.3cm}
\caption{\color{black} Average cosine similarity between the true Laplacian representation and ALLO for $\alpha_{\text{barrier}}=0.01$, averaged over 60 seeds. Each 20,000 gradient steps the coordinates of the Laplacian representation are randomly permuted. The shaded region corresponds to a 95\% confidence interval.}
\label{fig:permutations}
\vspace{-0.3cm}
\end{figure}

As an additional sanity check for Theorem 1, Figure \ref{fig:permutations} shows the evolution of the cosine similarity for the 12 grid environments considered under random permutations. Specifically, each 20,000 gradient steps a random permutation is sampled and then the output coordinates of the approximator $\phi_\theta$ are permuted accordingly. If any arbitrary permutation (or rotation) of the Laplacian eigenvectors were stable, the cosine similarity would decrease after the permutation and it would not increase again. On the contrary, we can observe how in each case learning is as successful as in the initial cycle without permutations.
}

\section{Average cosine similarity comparison} \label{ap:cs_comparison}
{\color{black}
In addition to the results considered in Figure \ref{fig:comparison}, Figure \ref{fig:comparison_cs_B} shows the cosine similarity for \textit{each} of the components of the Laplacian representation. It is worth noting that in some environments (e.g., \texttt{GridRoom-32} and \texttt{GridMaze-32}) GGDO struggles with the first 2 eigenvectors, which are the most relevant to induce options for exploration. 
}

\begin{figure}[h]
    \centering
    \begin{subfigure}{0.62\textwidth}
        \includegraphics[width=\linewidth]{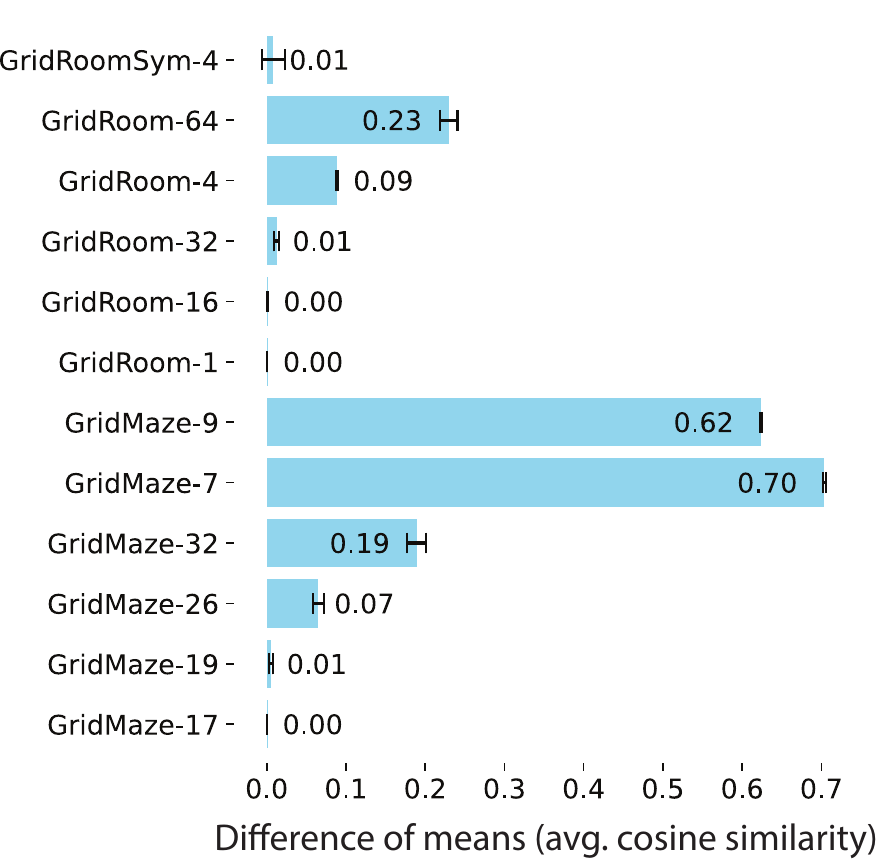}
        \caption{\color{black} Average cosine similarity across the $d$ components.}
        \label{fig:comparison_cs_A}
    \end{subfigure}%
    \\[0.5cm]
    \begin{subfigure}{0.86\textwidth}
        \includegraphics[width=\linewidth]{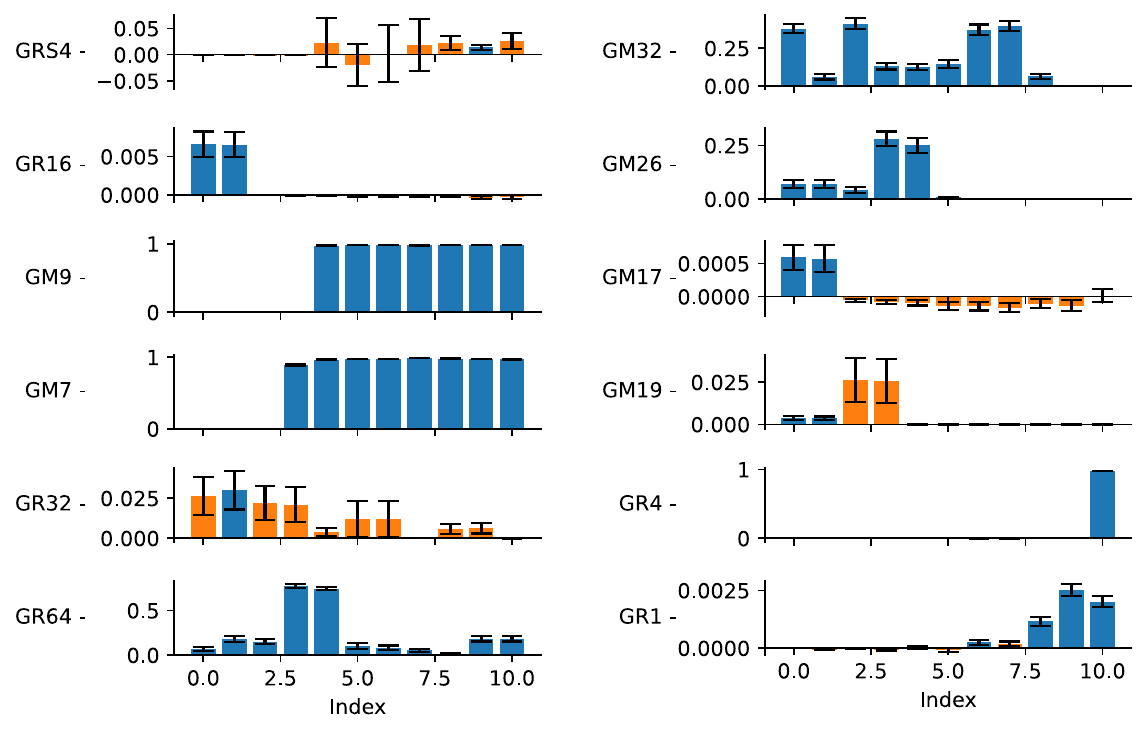}
        \caption{\color{black} Cosine similarity for each of the components. \texttt{GR} and \texttt{GM} stand for \texttt{GridRoom} and \texttt{GridMaze}. black bars correspond to p-values below $0.01$.}
        \label{fig:comparison_cs_B}
    \end{subfigure}
    \caption[Cosine similarity difference between ALLO and GGDO when using the $(x,y)$ state representation.]{\color{black} Cosine similarity difference between ALLO and GGDO when using the $\boldsymbol{(x,y)}$ \textbf{state representation}. Error bars show the standard deviation of the differences, defined as $\sqrt{\sigma_{\text{ALLO}}^2/n_{\text{ALLO}} + \sigma_{\text{GGDO}}^2/n_{\text{GGDO}}}$, where $\sigma$ denotes the sample standard deviation, and $n$ the number of seeds ($=60$ in all cases).}
    \label{fig:comparison_cs}
\end{figure}

{\color{black}
As a supplementary robustness test, we changed the state representation used as input to the neural network. In particular, we used a pixel representation where each tile in the subplots of Figure \ref{fig:eigvecs_all_envs} corresponds to a single pixel. In this way, the inputs ranged from a size of $7\times9\times3$ to a size of $41\times41\times3$. Correspondingly, we added two convolutional layers with no pooling, stride of 2, and kernel size of 3 to the previous fully connected network.

Figure \ref{fig:comparison_cs_pixels} contains analogous cosine similarity comparisons to those observed in Figure \ref{fig:comparison_cs} for the $(x,y)$ state representation. There are three main differences with the previous results. First, the average cosine similarity difference (Figure \ref{fig:comparison_cs_pixels_A}) is now only significant in 7 out of 12 environments, instead of 9 of 12 (see Tables \ref{tab:comparison} and \ref{tab:comparison_pixels}), but the difference is higher for several environments (e.g., \texttt{GridRoom-32} and \texttt{GridRoom-1}). Second, ALLO only finds the true Laplacian representation in 6 out of 12 environments, as opposed to 11 out of 12. From those 6 where perfect learning is not achieved, 4 have a cosine similarity above 0.9, one corresponds to the same environment where the Laplacian representation was not perfectly learned before, and there is one for which the similarity is only 0.61. Focusing on these results, we observed that ALLO was able to recover the Laplacian representation for some seeds, but for others the output of the neural network collapsed to a constant. This suggests that maintaining the same hyperparameters with the new state representation and convolutional layers could have resulted in divergence problems unrelated to ALLO. Lastly, we can observe (see Figure \ref{fig:comparison_cs_pixels_B}) that GGDO fails more frequently, in contrast with the $(x,y)$ representation case, in finding the smallest non-constant eigenvector (e.g., in \texttt{GridRoom-16} and \texttt{GridRoom-1}), supporting the hypothesis that GGDO is not scalable to more complex settings, as opposed to ALLO, and preventing the use of the Laplacian for exploration.
}

\begin{figure}[h]
    \centering
    \begin{subfigure}{0.62\textwidth}
        \includegraphics[width=\linewidth]{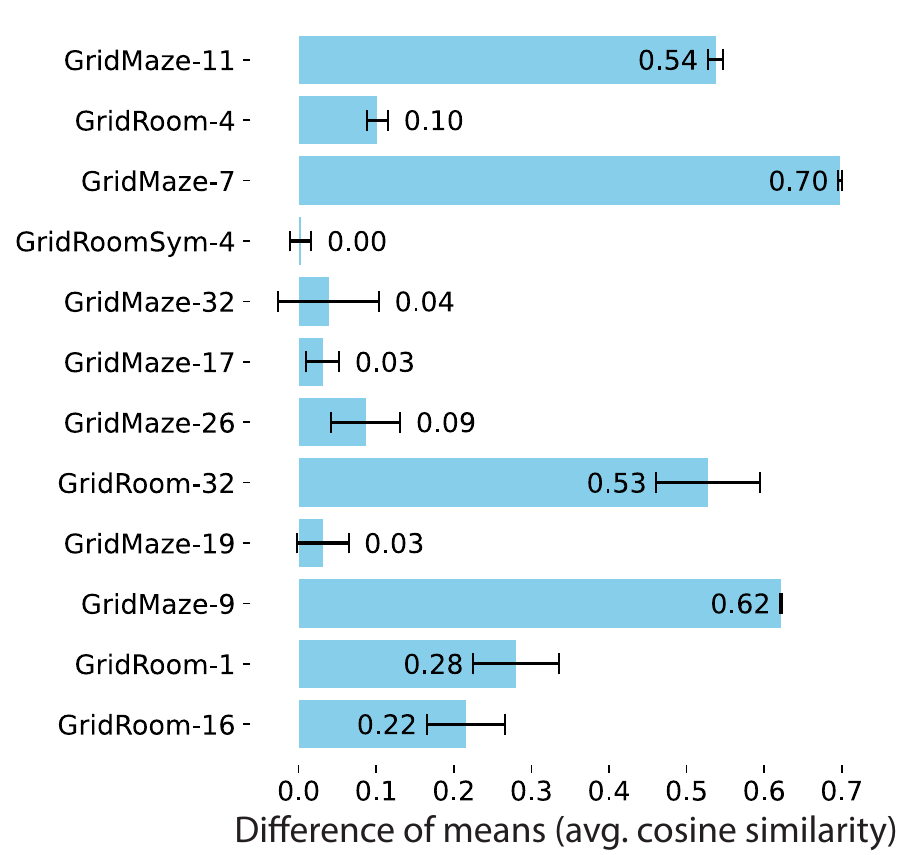}
        \caption{\color{black} Average cosine similarity across the $d$ components.}
        \label{fig:comparison_cs_pixels_A}
    \end{subfigure}%
    \\[0.5cm]
    \begin{subfigure}{0.86\textwidth}
        \includegraphics[width=\linewidth]{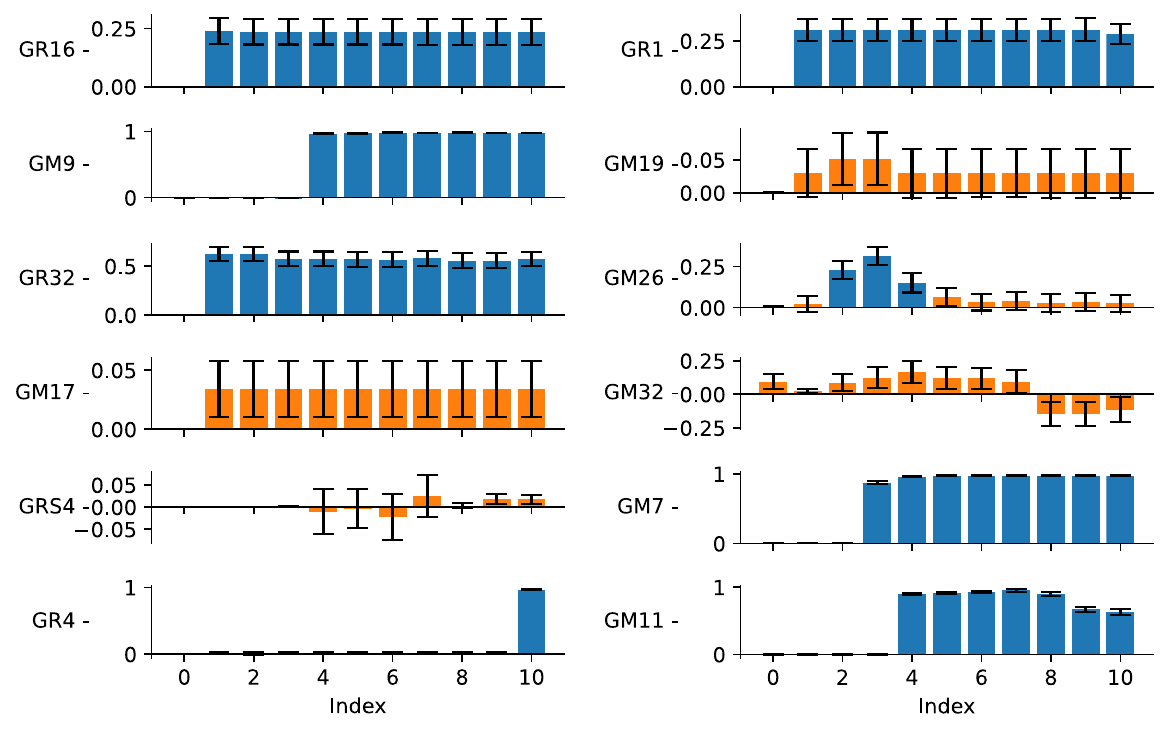}
        \caption{\color{black} Cosine similarity for each of the components. \texttt{GR} and \texttt{GM} stand for \texttt{GridRoom} and \texttt{GridMaze}. black bars correspond to p-values below $0.01$.}
        \label{fig:comparison_cs_pixels_B}
    \end{subfigure}
    \caption[Cosine similarity difference between ALLO and GGDO when using the pixel state representation.]{\color{black} Cosine similarity difference between ALLO and GGDO when using the \textbf{pixel state representation}. Error bars show the standard deviation of the differences, defined as $\sqrt{\sigma_{\text{ALLO}}^2/n_{\text{ALLO}} + \sigma_{\text{GGDO}}^2/n_{\text{GGDO}}}$, where $\sigma$ denotes the sample standard deviation, and $n$ the number of seeds ($\in[50,60]$ in all cases).}
    \label{fig:comparison_cs_pixels}
\end{figure}

\clearpage

\begin{table}[h]
\centering
\begin{tabular}{ccccc}
\toprule
          Env &                     ALLO &            GGDO & t-statistic & p-value \\
\midrule
  GridMaze-17 &          0.9994 (0.0002) & 0.9993 (0.0003) &       0.641 &   0.262 \\
  GridMaze-19 &          0.9989 (0.0006) & 0.9936 (0.0185) &       2.218 &   0.015 \\
  GridMaze-26 & \textbf{0.9984 (0.0007)} & 0.9331 (0.0517) &       9.770 &   0.000 \\
  GridMaze-32 & \textbf{0.9908 (0.0161)} & 0.8014 (0.0901) &      16.018 &   0.000 \\
   GridMaze-7 & \textbf{0.9996 (0.0002)} & 0.2959 (0.0159) &     343.724 &   0.000 \\
   GridMaze-9 & \textbf{0.9989 (0.0007)} & 0.3755 (0.0081) &     596.775 &   0.000 \\
   GridRoom-1 & \textbf{0.9912 (0.0003)} & 0.9906 (0.0003) &       9.691 &   0.000 \\
  GridRoom-16 & \textbf{0.9990 (0.0004)} & 0.9980 (0.0023) &       3.297 &   0.001 \\
  GridRoom-32 & \textbf{0.9982 (0.0010)} & 0.9857 (0.0266) &       3.647 &   0.000 \\
   GridRoom-4 & \textbf{0.9965 (0.0052)} & 0.9073 (0.0063) &      84.136 &   0.000 \\
  GridRoom-64 & \textbf{0.9917 (0.0059)} & 0.7617 (0.0834) &      21.326 &   0.000 \\
GridRoomSym-4 &          0.8411 (0.0742) & 0.8326 (0.0855) &       0.581 &   0.281 \\
\bottomrule
\end{tabular}
\caption[Average cosine similarities for ALLO and GGDO when using the $(x,y)$ state representation.]{Average cosine similarities for ALLO and GGDO when using the $\boldsymbol{(x,y)}$ \textbf{state representation}. The sample average is calculated using 60 seeds. The standard deviation is shown in parenthesis and the maximal average is shown in boldface when the difference is statistically significant, meaning the associated p-value is smaller than 0.01.}
\label{tab:comparison}
\vspace{60pt}
\centering
{\color{black}
\begin{tabular}{ccccc}
\toprule
          Env &                     ALLO &            GGDO & t-statistic & p-value \\
\midrule
  GridRoom-16 & \textbf{0.9992 (0.0003)} & 0.7836 (0.3897) &       4.251 &   0.000 \\
   GridRoom-1 & \textbf{0.9914 (0.0002)} & 0.7114 (0.4199) &       5.080 &   0.000 \\
   GridMaze-9 & \textbf{0.9988 (0.0013)} & 0.3773 (0.0064) &     720.882 &   0.000 \\
  GridMaze-19 &          0.9789 (0.1354) & 0.9480 (0.2032) &       0.926 &   0.178 \\
  GridRoom-32 & \textbf{0.9073 (0.2591)} & 0.3795 (0.4183) &       7.917 &   0.000 \\
  GridMaze-26 &          0.9330 (0.2214) & 0.8466 (0.2358) &       1.935 &   0.028 \\
  GridMaze-17 &          0.9995 (0.0001) & 0.9687 (0.1658) &       1.429 &   0.079 \\
  GridMaze-32 &          0.6198 (0.3056) & 0.5810 (0.3745) &       0.598 &   0.276 \\
GridRoomSym-4 &          0.8434 (0.0743) & 0.8407 (0.0686) &       0.197 &   0.422 \\
   GridMaze-7 & \textbf{0.9997 (0.0002)} & 0.3015 (0.0211) &     253.863 &   0.000 \\
   GridRoom-4 & \textbf{0.9965 (0.0076)} & 0.8950 (0.1065) &       7.297 &   0.000 \\
  GridMaze-11 & \textbf{0.9349 (0.0496)} & 0.3975 (0.0592) &      52.931 &   0.000 \\
\bottomrule
\end{tabular}
}
\caption[Average cosine similarities for ALLO and GGDO when using the pixel state representation.]{\color{black} Average cosine similarities for ALLO and GGDO when using the \textbf{pixel state representation}. The sample average is calculated using 60 seeds. The standard deviation is shown in parenthesis and the maximal average is shown in boldface when the difference is statistically significant, meaning the associated p-value is smaller than 0.01.}
\label{tab:comparison_pixels}
\end{table}

\hfill
\clearpage

{\color{black}
\section{Average eigenvalues}\label{ap:eigenvalues}

\begin{table}[h]
\centering
\resizebox{\textwidth}{!}{%
\begin{minipage}[t]{0.72\textwidth}
\centering
\begin{tabular}{c|ccc}
\toprule
Env & True & ALLO & GGDO \\
\midrule
GridRoomSym-4 & 0.0000 & 0.0000 (0.0000) & 0.0000 (0.0000) \\[0.05cm]
 & 0.0540 & 0.0429 (0.0003) & 0.0395 (0.0003) \\[0.025cm]
 & 0.0540 & 0.0434 (0.0003) & 0.0387 (0.0003) \\[0.025cm]
 & 0.1068 & 0.0863 (0.0005) & 0.0703 (0.0005) \\[0.025cm]
 & 0.4502 & 0.4034 (0.0007) & 0.0531 (0.0007) \\[0.025cm]
 & 0.4507 & 0.4045 (0.0007) & 0.0529 (0.0007) \\[0.025cm]
 & 0.4507 & 0.4054 (0.0007) & 0.0524 (0.0007) \\[0.025cm]
 & 0.4512 & 0.4066 (0.0008) & 0.0520 (0.0008) \\[0.025cm]
 & 0.4915 & 0.4479 (0.0008) & 0.0156 (0.0008) \\[0.025cm]
 & 0.4951 & 0.4513 (0.0007) & 0.0121 (0.0007) \\[0.025cm]
 & 0.4951 & 0.4526 (0.0008) & 0.0108 (0.0008) \\[0.025cm]
\midrule
GridRoom-16 & 0.0000 & 0.0000 (0.0000) & 0.0000 (0.0000) \\[0.025cm]
 & 0.0016 & 0.0017 (0.0000) & 0.0016 (0.0000) \\[0.025cm]
 & 0.0063 & 0.0055 (0.0000) & 0.0051 (0.0000) \\[0.025cm]
 & 0.0139 & 0.0116 (0.0001) & 0.0111 (0.0001) \\[0.025cm]
 & 0.0242 & 0.0199 (0.0002) & 0.0189 (0.0002) \\[0.025cm]
 & 0.0367 & 0.0301 (0.0002) & 0.0279 (0.0002) \\[0.025cm]
 & 0.0511 & 0.0420 (0.0002) & 0.0382 (0.0002) \\[0.025cm]
 & 0.0663 & 0.0546 (0.0003) & 0.0482 (0.0003) \\[0.025cm]
 & 0.0832 & 0.0688 (0.0005) & 0.0591 (0.0005) \\[0.025cm]
 & 0.1007 & 0.0836 (0.0006) & 0.0691 (0.0006) \\[0.025cm]
 & 0.1161 & 0.0969 (0.0005) & 0.0774 (0.0005) \\[0.025cm]
\midrule
GridMaze-9 & 0.0000 & 0.0001 (0.0000) & 0.0000 (0.0000) \\[0.025cm]
 & 0.1582 & 0.1154 (0.0006) & 0.0962 (0.0006) \\[0.025cm]
 & 0.3083 & 0.2354 (0.0011) & 0.1159 (0.0011) \\[0.025cm]
 & 0.4899 & 0.3961 (0.0015) & 0.0181 (0.0015) \\[0.025cm]
 & 0.6613 & 0.5674 (0.0020) & 0.0000 (0.0020) \\[0.025cm]
 & 0.7529 & 0.6692 (0.0024) & 0.0000 (0.0024) \\[0.025cm]
 & 0.7777 & 0.6986 (0.0023) & 0.0000 (0.0023) \\[0.025cm]
 & 0.8266 & 0.7585 (0.0027) & 0.0000 (0.0027) \\[0.025cm]
 & 0.8613 & 0.8038 (0.0028) & 0.0000 (0.0028) \\[0.025cm]
 & 0.8768 & 0.8251 (0.0029) & 0.0000 (0.0029) \\[0.025cm]
 & 0.8796 & 0.8292 (0.0029) & 0.0000 (0.0029) \\[0.025cm]
\midrule
GridMaze-7 & 0.0000 & 0.0001 (0.0000) & 0.0000 (0.0000) \\[0.025cm]
 & 0.1833 & 0.1325 (0.0006) & 0.1034 (0.0006) \\[0.025cm]
 & 0.4622 & 0.3645 (0.0011) & 0.0437 (0.0011) \\[0.025cm]
 & 0.5208 & 0.4186 (0.0010) & 0.0000 (0.0010) \\[0.025cm]
 & 0.7148 & 0.6158 (0.0012) & 0.0000 (0.0012) \\[0.025cm]
 & 0.7958 & 0.7084 (0.0014) & 0.0000 (0.0014) \\[0.025cm]
 & 0.8392 & 0.7621 (0.0014) & 0.0000 (0.0014) \\[0.025cm]
 & 0.8549 & 0.7826 (0.0017) & 0.0000 (0.0017) \\[0.025cm]
 & 0.8739 & 0.8076 (0.0015) & 0.0000 (0.0015) \\[0.025cm]
 & 0.8934 & 0.8347 (0.0016) & 0.0000 (0.0016) \\[0.025cm]
 & 0.9091 & 0.8572 (0.0018) & 0.0000 (0.0018) \\[0.025cm]
\midrule
GridRoom-32 & 0.0000 & 0.0000 (0.0000) & 0.0000 (0.0000) \\[0.025cm]
 & 0.0008 & 0.0010 (0.0000) & 0.0008 (0.0000) \\[0.025cm]
 & 0.0018 & 0.0019 (0.0000) & 0.0017 (0.0000) \\[0.025cm]
 & 0.0039 & 0.0036 (0.0000) & 0.0031 (0.0000) \\[0.025cm]
 & 0.0065 & 0.0057 (0.0001) & 0.0053 (0.0001) \\[0.025cm]
 & 0.0135 & 0.0114 (0.0001) & 0.0105 (0.0001) \\[0.025cm]
 & 0.0161 & 0.0135 (0.0001) & 0.0124 (0.0001) \\[0.025cm]
 & 0.0200 & 0.0167 (0.0001) & 0.0151 (0.0001) \\[0.025cm]
 & 0.0270 & 0.0223 (0.0002) & 0.0206 (0.0002) \\[0.025cm]
 & 0.0284 & 0.0236 (0.0002) & 0.0212 (0.0002) \\[0.025cm]
 & 0.0364 & 0.0301 (0.0002) & 0.0265 (0.0002) \\[0.025cm]
\midrule
GridRoom-64 & 0.0000 & 0.0000 (0.0000) & 0.0001 (0.0000) \\[0.025cm]
 & 0.0004 & 0.0007 (0.0000) & 0.0006 (0.0000) \\[0.025cm]
 & 0.0010 & 0.0012 (0.0000) & 0.0009 (0.0000) \\[0.025cm]
 & 0.0016 & 0.0017 (0.0000) & 0.0017 (0.0000) \\[0.025cm]
 & 0.0020 & 0.0021 (0.0000) & 0.0015 (0.0000) \\[0.025cm]
 & 0.0021 & 0.0022 (0.0000) & 0.0018 (0.0000) \\[0.025cm]
 & 0.0035 & 0.0033 (0.0000) & 0.0031 (0.0000) \\[0.025cm]
 & 0.0041 & 0.0038 (0.0000) & 0.0034 (0.0000) \\[0.025cm]
 & 0.0063 & 0.0055 (0.0000) & 0.0051 (0.0000) \\[0.025cm]
 & 0.0090 & 0.0078 (0.0001) & 0.0072 (0.0001) \\[0.025cm]
 & 0.0097 & 0.0084 (0.0001) & 0.0078 (0.0001) \\[0.025cm]
\bottomrule
\end{tabular}
\end{minipage}
\hfill
\begin{minipage}[t]{0.72\textwidth}
\centering
\begin{tabular}{c|ccc}
\toprule
Env & True & ALLO & GGDO \\
\midrule
GridMaze-32 & 0.0000 & 0.0000 (0.0000) & 0.0001 (0.0000) \\[0.025cm]
 & 0.0000 & 0.0007 (0.0001) & 0.0004 (0.0001) \\[0.025cm]
 & 0.0004 & 0.0008 (0.0000) & 0.0006 (0.0000) \\[0.025cm]
 & 0.0013 & 0.0015 (0.0000) & 0.0010 (0.0000) \\[0.025cm]
 & 0.0037 & 0.0035 (0.0000) & 0.0033 (0.0000) \\[0.025cm]
 & 0.0043 & 0.0040 (0.0000) & 0.0038 (0.0000) \\[0.025cm]
 & 0.0058 & 0.0053 (0.0001) & 0.0050 (0.0001) \\[0.025cm]
 & 0.0063 & 0.0056 (0.0001) & 0.0053 (0.0001) \\[0.025cm]
 & 0.0075 & 0.0067 (0.0001) & 0.0059 (0.0001) \\[0.025cm]
 & 0.0099 & 0.0085 (0.0001) & 0.0080 (0.0001) \\[0.025cm]
 & 0.0148 & 0.0126 (0.0001) & 0.0118 (0.0001) \\[0.025cm]
\midrule
GridMaze-26 & 0.0000 & 0.0000 (0.0000) & 0.0001 (0.0000) \\[0.025cm]
 & 0.0005 & 0.0008 (0.0000) & 0.0007 (0.0000) \\[0.025cm]
 & 0.0034 & 0.0032 (0.0000) & 0.0030 (0.0000) \\[0.025cm]
 & 0.0039 & 0.0037 (0.0001) & 0.0035 (0.0001) \\[0.025cm]
 & 0.0048 & 0.0044 (0.0001) & 0.0039 (0.0001) \\[0.025cm]
 & 0.0058 & 0.0051 (0.0001) & 0.0048 (0.0001) \\[0.025cm]
 & 0.0094 & 0.0080 (0.0001) & 0.0078 (0.0001) \\[0.025cm]
 & 0.0129 & 0.0109 (0.0001) & 0.0105 (0.0001) \\[0.025cm]
 & 0.0156 & 0.0131 (0.0001) & 0.0130 (0.0001) \\[0.025cm]
 & 0.0195 & 0.0164 (0.0002) & 0.0154 (0.0002) \\[0.025cm]
 & 0.0255 & 0.0213 (0.0002) & 0.0201 (0.0002) \\[0.025cm]
\midrule
GridMaze-17 & 0.0000 & 0.0000 (0.0000) & 0.0000 (0.0000) \\[0.025cm]
 & 0.0029 & 0.0027 (0.0000) & 0.0026 (0.0000) \\[0.025cm]
 & 0.0116 & 0.0097 (0.0001) & 0.0094 (0.0001) \\[0.025cm]
 & 0.0257 & 0.0212 (0.0001) & 0.0199 (0.0001) \\[0.025cm]
 & 0.0448 & 0.0368 (0.0002) & 0.0331 (0.0002) \\[0.025cm]
 & 0.0682 & 0.0562 (0.0003) & 0.0490 (0.0003) \\[0.025cm]
 & 0.0952 & 0.0790 (0.0004) & 0.0647 (0.0004) \\[0.025cm]
 & 0.1251 & 0.1045 (0.0005) & 0.0809 (0.0005) \\[0.025cm]
 & 0.1572 & 0.1327 (0.0005) & 0.0950 (0.0005) \\[0.025cm]
 & 0.1907 & 0.1624 (0.0006) & 0.1065 (0.0006) \\[0.025cm]
 & 0.2249 & 0.1937 (0.0007) & 0.1163 (0.0007) \\[0.025cm]
\midrule
GridMaze-19 & 0.0000 & 0.0000 (0.0000) & 0.0000 (0.0000) \\[0.025cm]
 & 0.0016 & 0.0016 (0.0000) & 0.0014 (0.0000) \\[0.025cm]
 & 0.0058 & 0.0051 (0.0000) & 0.0048 (0.0000) \\[0.025cm]
 & 0.0068 & 0.0059 (0.0001) & 0.0056 (0.0001) \\[0.025cm]
 & 0.0140 & 0.0117 (0.0001) & 0.0111 (0.0001) \\[0.025cm]
 & 0.0232 & 0.0192 (0.0002) & 0.0177 (0.0002) \\[0.025cm]
 & 0.0365 & 0.0300 (0.0002) & 0.0283 (0.0002) \\[0.025cm]
 & 0.0403 & 0.0332 (0.0003) & 0.0306 (0.0003) \\[0.025cm]
 & 0.0516 & 0.0425 (0.0004) & 0.0390 (0.0004) \\[0.025cm]
 & 0.0559 & 0.0461 (0.0003) & 0.0416 (0.0003) \\[0.025cm]
 & 0.0821 & 0.0679 (0.0004) & 0.0567 (0.0004) \\[0.025cm]
\midrule
GridRoom-4 & 0.0000 & 0.0000 (0.0000) & 0.0000 (0.0000) \\[0.025cm]
 & 0.0490 & 0.0392 (0.0003) & 0.0357 (0.0003) \\[0.025cm]
 & 0.0576 & 0.0462 (0.0003) & 0.0416 (0.0003) \\[0.025cm]
 & 0.1122 & 0.0908 (0.0005) & 0.0732 (0.0005) \\[0.025cm]
 & 0.3905 & 0.3448 (0.0014) & 0.0923 (0.0014) \\[0.025cm]
 & 0.4420 & 0.3963 (0.0009) & 0.0592 (0.0009) \\[0.025cm]
 & 0.4531 & 0.4080 (0.0010) & 0.0517 (0.0010) \\[0.025cm]
 & 0.4585 & 0.4139 (0.0010) & 0.0451 (0.0010) \\[0.025cm]
 & 0.4787 & 0.4348 (0.0008) & 0.0283 (0.0008) \\[0.025cm]
 & 0.4917 & 0.4489 (0.0010) & 0.0152 (0.0010) \\[0.025cm]
 & 0.5209 & 0.4802 (0.0008) & 0.0000 (0.0008) \\[0.025cm]
\midrule
GridRoom-1 & 0.0000 & 0.0000 (0.0000) & 0.0000 (0.0000) \\[0.025cm]
 & 0.0895 & 0.0728 (0.0003) & 0.0610 (0.0003) \\[0.025cm]
 & 0.0895 & 0.0735 (0.0004) & 0.0629 (0.0004) \\[0.025cm]
 & 0.1643 & 0.1372 (0.0005) & 0.0986 (0.0005) \\[0.025cm]
 & 0.2801 & 0.2412 (0.0006) & 0.1175 (0.0006) \\[0.025cm]
 & 0.2801 & 0.2425 (0.0005) & 0.1183 (0.0005) \\[0.025cm]
 & 0.3277 & 0.2868 (0.0007) & 0.1128 (0.0007) \\[0.025cm]
 & 0.3277 & 0.2884 (0.0007) & 0.1134 (0.0007) \\[0.025cm]
 & 0.4376 & 0.3980 (0.0009) & 0.0622 (0.0009) \\[0.025cm]
 & 0.4622 & 0.4229 (0.0008) & 0.0432 (0.0008) \\[0.025cm]
 & 0.4622 & 0.4244 (0.0008) & 0.0419 (0.0008) \\[0.025cm]
\bottomrule
\end{tabular}
\end{minipage}
}
\caption[Average eigenvalues for ALLO and GGDO.]{\color{black} Average eigenvalues for ALLO and GGDO. For each environment, the true eigenvalues are shown in decreasing order, from the 2nd one up to the 11th. The sample averages are calculated using 60 seeds and in parenthesis are the standard deviations.}
\label{tab:comparison_eigenvalues}
\vspace{-120pt}
\end{table}
}

\end{document}